\newcommand{\citep}[1]{\cite{#1}}
\newenvironment{customthm}[1]
  {\innercustomthm}
  {\endinnercustomthm}
\newenvironment{customcor}[1]
  {\innercustomcor}
  {\endinnercustomcor}
\newtheorem{theo}{Theorem}[section]
\newtheorem{cor}[theo]{Corollary}
\newtheorem{lemma}[theo]{Lemma}
\newtheorem{claim}[theo]{Claim}
\theoremstyle{remark}
\theoremstyle{definition}
\newenvironment{proofsketch}{%
  \proof}{\endproof}
\newcommand{\bW}{\bm{W}}
\newcommand{\bP}{\bm{P}}
\newcommand{\bR}{\bm{R}}
\newcommand{\bX}{\bm{X}}
\newcommand{\bD}{\bm{D}}
\newcommand{\bE}{\bm{E}}
\newcommand{\bb}{\bm{b}}
\newcommand{\bI}{\bm{I}}
\newcommand{\bx}{\bm{x}}
\newcommand{\br}{\bm{r}}
\newcommand{\bB}{\bm{B}}
\newcommand{\bA}{\bm{A}}
\newcommand{\bq}{\bm{q}}
\newcommand{\one}{\text{\textbf{1}}}
\newcommand{\bdelta}{\bm{\delta}}
\newcommand{\softmax}{\text{softmax}}
\newcommand{\lnorm}[1]{\text{LN}\hspace{-1pt}\left(#1\right)}
\newcommand{\res}{\text{res}}
\newcommand{\T}{\top}
\newcommand{\din}{d_{\textit{in}}}
\newcommand{\dv}{d_{\textit{v}}}
\newcommand{\dqk}{d_{\textit{qk}}}
\newcommand{\att}{\text{SA}\xspace}
\newcommand{\attnet}{\text{SAN}\xspace}
\newcommand{\attnets}{\text{SANs}\xspace}
\title{Attention is not \textit{all} you need: \\pure attention loses rank doubly exponentially with depth}
\author{
  Yihe Dong\\
  Google \\
  \href{mailto:yihed@google.com}{\color{black}{yihed@google.com}}
  \and
  Jean-Baptiste Cordonnier \\
  EPFL \\
  \href{mailto:jean-baptiste.cordonnier@epfl.ch}{\color{black}{jean-baptiste.cordonnier@epfl.ch}}
  \and
  Andreas Loukas \\
  EPFL \\
  \href{mailto:andreas.loukas@epfl.ch}{\color{black}{andreas.loukas@epfl.ch}}
}
\begin{document}
\date{}
\maketitle

\begin{abstract}
Attention-based architectures have become ubiquitous in machine learning.
Yet our understanding of the reasons for their effectiveness remains limited.
This work proposes a new way to understand self-attention networks: we show that their output can be decomposed into a sum of smaller terms, each involving the operation of a sequence of attention heads across layers.
Using this decomposition, we prove that self-attention possesses a strong inductive bias towards ``token uniformity". Specifically, without skip connections or multi-layer perceptrons (MLPs), the output converges doubly exponentially to a rank-1 matrix. On the other hand, skip connections and MLPs stop the output from degeneration. Our experiments verify the identified convergence phenomena on different variants of standard transformer architectures\footnote{Our code is publicly available at \url{https://github.com/twistedcubic/attention-rank-collapse}}.
\end{abstract}

\section{Introduction}

The attention mechanism \cite{bahdanau2015} was initially developed to better learn long-range sequential knowledge, and found effective use in transformer networks \cite{vaswaniTransformer}. Since then, attention-based architectures have permeated across data domains machine learning applications, such as in natural language processing \cite{devlin2018bert, decomposableAtt}, speech recognition \cite{luoSpeech2020}, and computer vision \cite{ramachandranCV2019, belloCV2019}. As such, it is vital to develop tools to understand the inner workings of transformers and attention in general, both to shed light on existing models, and to design more effective future models.

This work provides new insights about the operation and inductive bias of networks built by stacking multiple self-attention layers. %
Surprisingly, we find that \textit{pure} self-attention networks (\attnets), i.e., transformers with skip connections and multi-layer perceptrons (MLPs)  disabled, lose expressive power \textit{doubly exponentially} with respect to network depth. More specifically, we prove that the output converges with a cubic rate to a rank one matrix that has identical rows. %
While we derive the convergence bounds in part by using properties of stochastic matrices, our results go beyond what one would expect based on standard results. In particular, by leveraging the cascading effects of specifically stacking self-attention modules, we show exponentially faster convergence than what standard theory prescribes. Furthermore, while previous studies have considered the rank of individial self-attention matrices \cite{linformerWang, kernelTransformer,jbCollaborate}, our results are the first to address conditions under which the \textit{entire} network converges to rank \textit{one}.

This raises the question, why do transformers work?
Our analysis indicates that skip connections play a key role in mitigating rank collapse, and MLPs can slow down the convergence by increasing their Lipschitz constant. We characterize these counteracting forces by proving upper and lower bounds of this convergence behavior under \attnet architectural variants that resemble transformers. Our results reveal a previously unknown vital utility of skip connections, beyond facilitating optimization and gradient flow \cite{he2016deep, balduzzi2018shattered}.

In the process, we develop a new \textit{path decomposition} to study self-attention networks. %
Namely, we decompose a \attnet into a linear combination of weakly-interdependent \textit{paths}, where each `path' corresponds to a deep single-head \attnet. %
Intuitively, one can view the self-attention heads in each layer of the original network as different gateways, and a path follows a sequence of gateway choices, one gateway per layer (Figure~\ref{fig-path-diagram}).
Coupled with the rank collapse analysis, our results suggest that deep \attnets with skip connections %
behave like an ensemble of weakly-dependent shallow networks.

Our main contributions are as follows:
(1) We present a systematic study of building blocks of the transformer, revealing opposing impacts between self-attention and the counteracting forces: skip connections and MLP, in contributing and preventing a \textit{rank collapse} in transformers. As a corollary, this reveals a previously unknown vital effect of skip connections beyond facilitating optimization.
(2) We propose a new method for analyzing \attnets via a \textit{path decomposition}, revealing \attnets as an ensemble of shallow networks. %
(3) We verify our theory with experiments on common transformer architectures. %

\paragraph{Notation.}  In the sequel, bold-face lower/upper-case letters denote vectors and matrices, respectively. We denote the $\ell_1, \ell_\infty$-composite norm of a matrix $\bX$ as $\| \bX\|_{1,\infty} = \sqrt{\| \bX \|_1 \| \bX \|_\infty}$. We note that $\ell_{1,\infty}$ is not a proper norm as it does not satisfy the triangle inequality, though it is absolutely homogeneous and positive definite.  We also use the shorthand notation $[H] = (1,\cdots,H)$.

\section{Attention doubly exponentially loses rank}

We start by studying self-attention networks (\attnets) built exclusively out of multi-head self-attention layers. We prove that \attnets converge exponentially (with depth) to a rank-1 matrix that makes all tokens identical.

Our analysis in \S\ref{sec-path-dec} relies on an unconventional way to express the output of a multi-head \attnet as a sum of single-head networks. We refer to the latter as \textit{paths}, where each path is denoted by a sequence of attention heads %
(see Figure~\ref{fig-path-diagram}). A proof sketch of why rank collapse occurs is given in~\S\ref{sec-exp-path}, whereas the main rank collapse result is presented in~\S\ref{sec-exp-an}.

\begin{figure*}[t]
     \centering
     \includegraphics[width=.65\linewidth]{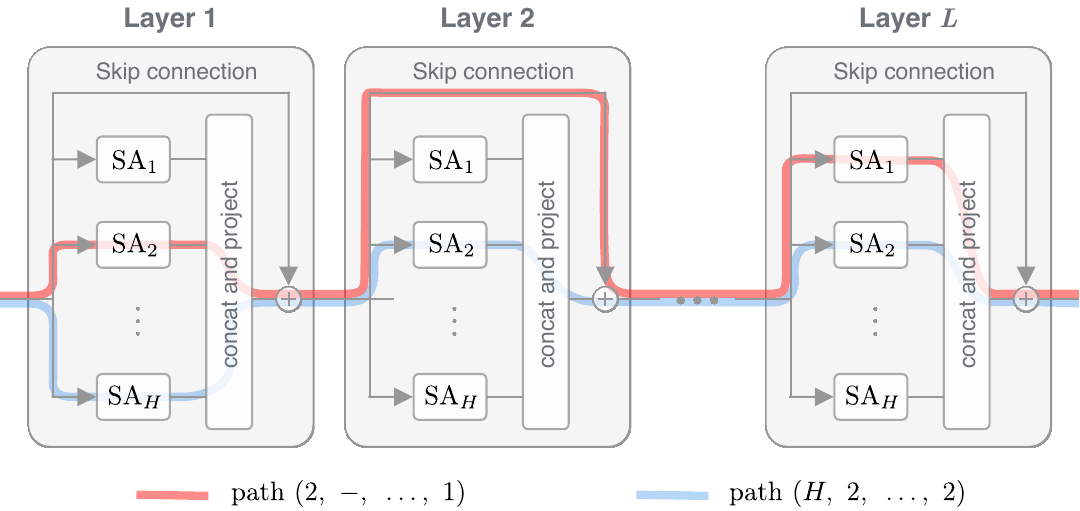}
         \caption{Two paths in a deep Self-Attention Network (\attnet{}) with $H$ heads and $L$ layers. At each layer, a path can go through one of the heads or bypass the layer. Adding an MLP block after each attention layer forms the transformer architecture.}
\label{fig-path-diagram}
\end{figure*} %

\subsection{The path decomposition argument}
\label{sec-path-dec}

Let $\bX$ be a $n \times \din$ input matrix consisting of $n$ tokens.
A \attnet is built out of $L$ multi-head self-attention layers, each having $H$ heads.
The output of the $h$-th self-attention head can be written as
\begin{align*}
    \att_h(\bX) = \bP_h \bX \bW_{V,h} + \one \bb_{V,h}^\T.
\end{align*}
Above, $\bW_{V,h}$ is a $ \din \times \dv$ value weight matrix and the $n \times n$ \textit{row-stochastic} matrix $\bP_h$ is given by
\begin{align*}
    \bP_h &= \softmax\big(\dqk^{-\frac{1}{2}}(\bX \bW_{Q,h} + \one \bb_{Q,h}^\T) (\bX \bW_{K,h} + \one\bb_{K,h}^\T )^\T\big) \\
    &= \softmax (\dqk^{-\frac{1}{2}}(\bX \bW_{QK,h} \bX^\T + \one \bb_{Q,h}^\T \bW_{K,h}^\T \bX^\T)),
\end{align*}
where (1) the key and query weight matrices $\bW_{K,h}$ and $\bW_{Q,h}$ are of size $\din \times \dqk$, (2) $\bW_{QK,h}= \bW_{Q,h} \bW_{K,h}^\T$, and (3) the softmax operates independently on each row of its input. We obtain the final equation by noting that softmax is shift-invariant and disregarding terms that provide a constant contribution across rows~\cite{jbCollaborate}.

The output of each \attnet layer is formed by concatenating the individual outputs of all $H$ attention heads (along the last dimension) and linearly projecting them onto a subspace of appropriate size:
\begin{align*}
    \att(\bX) &= \one [\bb_{O,1}^\T, \cdots, \bb_{O,H}^\T] +
    [\att_1(\bX), \cdots, \att_{H}(\bX)] \ [\bW_{O,1}^\T, \cdots, \bW_{O,H}^\T]^\T  \\
    &= \sum_{h \in [H]} \bP_h \bX \bW_{h} + \one \bb_{O}^\T,
\end{align*}
where we set $\bW_{h} = \bW_{V,h} \, \bW_{O,h}^\T$ and $\bb_{O} = \sum_{h} \bb_{O,h}$.

Let $\bX^{l}$ be the output of the $l$-th layer and fix $\bX^{0} = \bX$. As is common practice, we let all layers consist of the same number of heads.

Excluding biases $\one \bb_{O,h}^\T$, the \attnet output is given by
\begin{align*}
    \bX^L
    &= \sum_{h \in [H]} \bP_h^{L} \bX^{L-1} \bW_h^{L}  \nonumber \\
    &= \sum_{h \in [H]} \bP_h^{L} \bigg( \sum_{h' \in [H]} \bP_{h'}^{L-1} \bX^{L-2} \bW_{h'}^{L-1} \bigg) \, \bW_h^{L}
    = \hspace{-3mm} \sum_{h_L,h_{L-1} \in [H]^2} \hspace{-3mm} \bP_{h_L}^{L} \bP_{h_{L-1}}^{L-1} \bX^{L-2} \bW_{h_{L-1}}^{L-1} \bW_{h_L}^{L},     \nonumber
\end{align*}
which, after unrolling the recursion backwards, yields:
\begin{align*}
 \bX^L
    &= \hspace{-3mm} \sum_{h_1, \ldots, h_L \in [H]^L} \hspace{-3mm} (\bP_{h_L}^{L} \cdots \bP_{h_1}^{1}) \, \bX \, (\bW_{h_1}^{1} \cdots \bW_{h_L}^{L}).
\end{align*}
The above equations have a clear interpretation if we think of the \attnet as a directed acyclic graph, with nodes corresponding to self-attention heads and directed edge connecting heads of consecutive layers.

We formalize this intuition in the following:

\begin{theo}[Path decomposition of \attnet]
The output of a depth $L$ self-attention network with $H$ heads per layer (including biases and skip connections) is given by
\begin{align}
    \attnet(\bX) = \sum_{\textit{path} \in [H]^L} \bP_\textit{path} \, \bX \, \bW_{\textit{path}} + \one \bb^\T,
    \label{eq-path-simple}
\end{align}
where
$\bP_\textit{path} = \bP_{h_L}^{L} \cdots \bP_{h_1}^{1}$ is an input-dependent stochastic matrix, whereas $\bW_{\textit{path}} = \bW_{h_1}^{1} \cdots \bW_{h_L}^{L}$ and $\bb$ do not depend on the input.
\end{theo}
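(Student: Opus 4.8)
The plan is to prove \eqref{eq-path-simple} by induction on the depth $L$, reducing each step to a single-layer update and exploiting the row-stochasticity $\bP_h\one=\one$ to ensure that every bias contribution stays rank one and input-independent.

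First I would record a unified single-layer update that also covers the skip connection: writing the bypass as an additional ``virtual head'' whose attention matrix and value--output weight are both $\bI$ and whose bias is $\mathbf{0}$, a generic layer acts as $\bX^l=\sum_{h\in[H]}\bP_h^l\bX^{l-1}\bW_h^l+\one(\bb_{O}^l)^\T$, which is exactly the attention-only form already derived in the excerpt (the query/key biases having been absorbed into $\bP_h^l$ by shift-invariance of softmax). Here each $\bP_h^l$ is row-stochastic and input-dependent, while $\bW_h^l=\bW_{V,h}^l(\bW_{O,h}^l)^\T$ and $\bb_{O}^l$ depend only on the learned parameters. The base case $L=0$ is the empty path with $\bP=\bI$, $\bW=\bI$, $\bb=\mathbf{0}$ (equivalently, $L=1$ is read off the layer formula directly).

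For the inductive step I would assume $\bX^{L-1}=\sum_{\textit{path}\in[H]^{L-1}}\bP_\textit{path}\,\bX\,\bW_\textit{path}+\one\bb_{L-1}^\T$ with the stated properties, apply the $L$-th layer, and distribute the sum: $\bX^L=\sum_{h\in[H]}\sum_{\textit{path}'\in[H]^{L-1}}(\bP_h^L\bP_{\textit{path}'})\,\bX\,(\bW_{\textit{path}'}\bW_h^L)+\sum_{h\in[H]}\bP_h^L\one\bb_{L-1}^\T\bW_h^L+\one(\bb_{O}^L)^\T$. Since $\bP_h^L\one=\one$, the middle sum equals $\one\big(\sum_{h}(\bW_h^L)^\T\bb_{L-1}+\bb_{O}^L\big)^\T=\one\bb^\T$, still input-independent; relabeling $\textit{path}=(\textit{path}',h)\in[H]^{L-1}\times[H]=[H]^L$ turns the double sum into $\sum_{\textit{path}\in[H]^L}\bP_\textit{path}\,\bX\,\bW_\textit{path}$ with $\bP_\textit{path}=\bP_{h_L}^L\cdots\bP_{h_1}^1$ and $\bW_\textit{path}=\bW_{h_1}^1\cdots\bW_{h_L}^L$. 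It remains to note that $\bP_\textit{path}$ is row-stochastic --- a product of nonnegative, unit-row-sum matrices is again nonnegative with unit row sums --- and input-dependent through the $\bX$-dependence of each factor, whereas $\bW_\textit{path}$ and $\bb$ are built only from learned weights.

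The hard part is not any single calculation but the bookkeeping: (i) fitting the skip connection into the single-layer template via the identity-head device, which is also what lets the path index set be taken as $[H]^L$ once the heads are relabeled to include the bypass option; and (ii) verifying that the heterogeneous biases --- one output bias per layer, plus the query/key biases already folded into the softmax --- remain collectively a single rank-one, input-independent term after being pushed through all later layers, which is precisely the content of $\bP_h\one=\one$. No step requires more than elementary matrix manipulation.
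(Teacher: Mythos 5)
Your proof is correct and follows essentially the same route as the paper: unroll the layer recursion (the paper does this explicitly in the text preceding the theorem, and its stated proof reduces to the two facts that row-stochastic matrices are closed under multiplication and that $\bP \one = \one$ collapses all bias contributions into a single input-independent rank-one term). Your explicit induction, the absorption of skip connections as an identity ``head'' with $\bP_0=\bW_0=\bI$, and the bias bookkeeping all match what the paper does, just written out in more detail.
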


\begin{proof} The proof follows from the fact that the set of row-stochastic matrices is closed under multiplication (i.e., $\bP_{h_L}^{L} \cdots \bP_{h_i}^{i}$ is row-stochastic) and, moreover, for any row-stochastic matrix $\bP$, we have $\bP \one =  \one$.
\end{proof}

Each term in \eqref{eq-path-simple} describes a path of length $L$ across heads of different layers
$$
    \textit{path} = (h_1, \ldots, h_L), \ \text{ where } \ h_l\in (0,1,\ldots, H)
$$
and there are a total of $H^L$ such paths without skip connections.

The path decomposition thus describes the action of a multi-head \attnet as the combination of simpler single-head networks. To gain intuition on path interdependence, it helps to split the operations performed into two types: those that act across tokens (multiplication from left) and those that apply independently on each token (multiplication from right).
As seen, though paths can interact through token mixing (since $P_\textit{path}$ matrices jointly depend on $X$), token-wise operations are independent.
We can also notice that biases are not particularly meaningful: their total contribution amounts to the single term $\one \bb^\T$ independently of the number of layers or heads used.

In the following we show that each path converges rapidly (as a function of length) to a rank-1 matrix with identical rows. Interestingly, this convergence is so dominant that adding more layers to the \attnet does not help: though the number of paths is increased exponentially, each path degenerates doubly exponentially, leading also to a rank-1 output.

\subsection{Convergence of single-head \attnet}
\label{sec-exp-path}

Before tackling the full \attnet, it is instructive to consider the behavior of each path separately.
We examine, in particular, how the residual
$$
    \res(\bX) = \bX - \one \bx^\T, \ \text{ where } \  \bx = \text{argmin}_{\bx} \| \bX - \one \bx^\T\|
$$
changes during the forward pass.

As the following result shows, the residual norm converges to zero surprisingly quickly (doubly exponentially with a cubic rate):

\begin{theo}[Simplified]
\label{thm-cubic-conv}
For any single-head \attnet consisting of $L$ layers with $\|\bW_{QK}^l\|_1 \|\bW_{V}^{l}\|_{1,\infty} \leq \beta$ and for a term $\gamma$ that depends on the attention entries, we have that
\begin{align} \label{mhsa-conv_rate}
   \|\res(\attnet(\bX))\|_{1,\infty} \leq \bigg(\frac{4 \gamma \beta }{\sqrt{\dqk}}\bigg)^{\frac{3^L-1}{2}} \, \|\res(\bX) \|^{3^L}_{1,\infty},
\end{align}
which amounts to a doubly exponential convergence to a rank-1 matrix.
\end{theo}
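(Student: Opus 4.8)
The plan is to prove a one-layer \emph{cubic contraction} of the residual and then iterate it $L$ times. The crucial point is that the attention matrix $\bP$ is not fixed but is itself a function of $\bX$: when $\bX$ is close to rank one, the logits $\dqk^{-1/2}\bX\bW_{QK}\bX^\T$ are close to a matrix with identical rows, so $\bP$ is close to a \emph{rank-one} stochastic matrix $\one\bq^\T$ — and, as I will argue, the distance $\|\bP-\one\bq^\T\|_{1,\infty}$ scales \emph{quadratically} with $\|\res(\bX)\|_{1,\infty}$. Multiplying by the single remaining factor of $\|\res(\bX)\|_{1,\infty}$ carried by the value matrix then produces a cubic dependence per layer, which unrolls to the doubly exponential bound in \eqref{mhsa-conv_rate}.

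For the one-layer step, write $\bX=\one\bx^\T+\bR$ with $\bR=\res(\bX)$ and consider $\bX'=\bP\bX\bW$ (biases only add rank-one terms, which are either killed by $\res$ or absorbed into the identical-rows part below, so I drop them; here $\bW$ is the layer's value matrix $\bW_V^l$). First I expand $\dqk^{-1/2}\bX\bW_{QK}\bX^\T$ using $\bX=\one\bx^\T+\bR$ and discard every summand of the form $\bm{c}\one^\T$, since adding a per-row constant does not change $\softmax$. The terms $\one\bx^\T\bW_{QK}\bx\one^\T$ and $\bR\bW_{QK}\bx\one^\T$ disappear this way, leaving logits $\bA+\bE$ where $\bA=\one\bm{d}^\T$ has identical rows and $\bE=\dqk^{-1/2}\bR\bW_{QK}\bR^\T$ is quadratic in $\bR$. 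Submultiplicativity of the matrix norms bounds $\|\bE\|$ by $\dqk^{-1/2}\|\bW_{QK}^l\|_1\|\bR\|_{1,\infty}^2$ up to a universal constant. Since $\softmax(\bA)=\one\bq^\T$ is exactly rank one, a perturbation estimate for the row-wise softmax gives $\|\bP-\one\bq^\T\|_{1,\infty}=\|\softmax(\bA+\bE)-\softmax(\bA)\|_{1,\infty}\le c\,\|\bE\|_{1,\infty}$; tracking constants here is what produces the $4$.

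To finish the layer, use $\bP\one=\one$ to write $\bP\bX=\one\bx^\T+\bP\bR$, so $\res(\bX')=\res(\bP\bR\bW)$; then subtract the rank-one term $\one\bq^\T\bR\bW$ to get $\res(\bX')=\res\big((\bP-\one\bq^\T)\bR\bW\big)$, whence
\[
  \|\res(\bX')\|_{1,\infty}\le\|\bP-\one\bq^\T\|_{1,\infty}\,\|\bR\|_{1,\infty}\,\|\bW\|_{1,\infty}\le\frac{4\beta}{\sqrt{\dqk}}\,\|\res(\bX)\|_{1,\infty}^3,
\]
using submultiplicativity of $\|\cdot\|_{1,\infty}$ and the hypothesis $\|\bW_{QK}^l\|_1\|\bW_V^l\|_{1,\infty}\le\beta$. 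Writing $r_l=\|\res(\bX^l)\|_{1,\infty}$ and $\gamma=4\beta/\sqrt{\dqk}$, the recursion $r_l\le\gamma\, r_{l-1}^3$ unrolls (the $r_0$ exponent triples each layer, the $\gamma$ exponent satisfies $e_l=1+3e_{l-1}$, so $e_L=(3^L-1)/2$) to $r_L\le\gamma^{(3^L-1)/2}\,r_0^{3^L}$, which is exactly \eqref{mhsa-conv_rate}.

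I expect the main obstacle to be the softmax perturbation step: obtaining a clean bound $\|\softmax(\bA+\bE)-\softmax(\bA)\|_{1,\infty}\lesssim\|\bE\|$ with the stated constant, while carefully tracking which of $\|\cdot\|_1$ and $\|\cdot\|_\infty$ each matrix factor contributes — extra care is needed because $\|\cdot\|_{1,\infty}$ is not a genuine norm, so one must separately check that it is submultiplicative (it is, being the geometric mean of two submultiplicative operator norms) and that $\|\res(\bM)\|_{1,\infty}\le\|\bM\|_{1,\infty}$. A subtler conceptual point, easy to get wrong, is the decomposition of the logits: one must see that \emph{only} the $\bR\bW_{QK}\bR^\T$ term spoils the rank-one structure of $\bP$, since it is this \emph{quadratic} (rather than linear) dependence on $\res(\bX)$ that turns the naively expected exponential decay into the doubly exponential, cubic-rate decay.
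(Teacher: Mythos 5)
Your proposal follows essentially the same route as the paper's proof: the decomposition $\bX=\one\bx^\T+\bR$, the reduction of the softmax logits to $\one\br^\T+\bE$ with $\bE=\dqk^{-1/2}\bR\bW_{QK}\bR^\T$ quadratic in the residual, a softmax perturbation bound showing $\bP$ is close to a rank-one stochastic matrix, and the resulting cubic one-layer recursion unrolled over $L$ layers to give the exponent $(3^L-1)/2$. The single step you defer --- the row-wise softmax perturbation estimate --- is precisely the paper's technical lemma, which establishes the entrywise sandwich $(\bI-\bD)\,\tilde{\bP}\le\bP\le(\bI+2\bD)\,\tilde{\bP}$ with $D_{ii}=\max_{j}|\bdelta_i^\T\bE(\bdelta_j-\bdelta_{j'})|$ and then bounds the rank-one product $\bD\,\one\,\bq^\T\bR\bW_V$ as a whole (exploiting $\|\bq\|_1=1$) rather than bounding $\|\bP-\one\bq^\T\|_{1,\infty}$ in isolation; this is where the constant $4$ and the careful split between $\|\cdot\|_1$ and $\|\cdot\|_\infty$ factors that you flag as the main obstacle are actually carried out.
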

For the full theorem, we refer the reader to the Appendix.

Note that the bound in Eq~\ref{mhsa-conv_rate} guarantees $\|\res(\attnet(\bX))\|_{1,\infty}$ convergence for all inputs of small residual whenever $4\gamma\beta < \sqrt{\dqk}$. In practice, our experiments imply that the region for convergence can be much greater.

The identified cubic rate of convergence is significantly faster than what would  be expected when analyzing products of stochastic matrices (linear rate).
As a rule of thumb, to achieve a decline of three orders of magnitude, say from 1000 to 1, one could expect a linear rate of convergence to require roughly a dozen iterations, whereas a cubic rate can do so in just two or three iterations.
The reason why we get a cubic rate is that the rank of attention matrices depends also on the rank of the input. As we show, the self-attention heads mix tokens faster when formed from a low-rank matrix. This phenomenon becomes stronger as we build deeper \attnets, leading to a cascading effect.

We provide a proof sketch bellow. Detailed proofs can be found in the Appendix.

\begin{proofsketch}
To analyze how the formation of $\bP_h$ is affected by the rank of the input, we start by writing $\bX = \one \bx^\T + \bR$ for $\bR = \res(\bX)$ and expanding the attention matrix accordingly:
\begin{align*}
    \bX \bW_{QK} \bX^\T
    = \left(\one \bx^\T + \bR\right) \bW_{QK} \left(\one \bx^\T + \bR\right)^\T
\end{align*}
Invoking once more the shift-invariance property of the softmax, the above can be simplified to
\begin{align*}
    \bP_h
    &= \softmax\left(\bR \frac{\bW_{QK}}{\sqrt{\dqk}} \bR^\T + \one \br^\T\right),
\end{align*}
for some appropriate $\br$. %
Observe that if the matrix within the softmax was $\one \br^\top$, then $\bP_h$ would also degenerate to a rank-1 matrix:
$
     \softmax( \one \br^\T) = \one \, \bq^\T
$
and the convergence would happen instantly.

The proof builds on this observation by showing that if $\bE = \bR \frac{\bW_{QK}}{\sqrt{\dqk}} \bR^\T$ is small then $\bP_h$ is almost rank-1:
$$
    \|\bP_h - \one \bq^\T\| \leq 2 \, \| \bD \, \one \, \bq^\T \|,
$$
where $\bD$ is diagonal and $\bD_{ii} = \max_{j} |\bdelta_i^\T \bE (\bdelta_j - \bdelta_{j'}) |$.
Thus, we have
\begin{align*}
    \bP_h \bX
    &= \bP_h (\one \bx^\T + \bR)  = \one \bx^\T + \softmax( \one \br^\T + \bE) \bR
\end{align*}
and, moreover,
$
    \| \res(\bP_h \bX)\| \leq 2 \, \| \bD \one \, \bq^\T \bR\|.
$
The proof concludes by bounding the above term and applying the argument recursively over successive layers.
\end{proofsketch}

\subsection{Exponential convergence for attention networks}
\label{sec-exp-an}

We now move on to analyse the convergence of \attnets with \textit{multiple} heads per layer.

Our main result is as follows:

\begin{theo}[Simplified] \label{thm-cubic-conv-mhsa}
Consider a depth-$L$ and width-$H$ self-attention network without skip connections. Suppose that $\| \bW_{QK,h}^l \|_1 \| \bW_{h}^l \|_{1, \infty} \leq \beta$ for all heads $h \in [H]$ and layers $l \in [L]$, and let $\gamma$ be a term that depends on the attention entries. We have
\begin{align} \label{eq-no-skip-conv-rate}
   \|\res(\attnet(\bX))\|_{1,\infty} \leq \left(\frac{4 \, \gamma \,\beta \, H}{\sqrt{\dqk}}\right)^{\frac{3^L-1}{2}} \, \|\res(\bX) \|^{3^L}_{1,\infty}, \notag
\end{align}
which amounts to a doubly exponential rate of convergence.
\end{theo}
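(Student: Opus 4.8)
The plan is to reduce the multi-head case to the single-head analysis of Theorem~\ref{thm-cubic-conv} by controlling the extra combinatorial factor introduced by having $H$ heads per layer. The starting point is the path decomposition (Theorem~2.2): since there are no skip connections, $\attnet(\bX) = \sum_{\textit{path} \in [H]^L} \bP_\textit{path}\,\bX\,\bW_\textit{path} + \one\bb^\T$, where now each path ranges over all genuine length-$L$ sequences of heads (no ``bypass'' option). Applying the residual operator $\res$, which kills the $\one\bb^\T$ term and is subadditive in the $\ell_{1,\infty}$ quasi-norm in the relevant sense, I would write $\|\res(\attnet(\bX))\|_{1,\infty} \le \sum_{\textit{path}} \|\res(\bP_\textit{path}\,\bX\,\bW_\textit{path})\|_{1,\infty}$. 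Each summand is exactly the object bounded in the single-head theorem, so each is at most $(4\beta/\sqrt{\dqk})^{(3^L-1)/2}\,\|\res(\bX)\|_{1,\infty}^{3^L}$; multiplying by the number of paths $H^L$ gives a bound with a stray factor $H^L$ rather than the claimed $H^{(3^L-1)/2}$, which is far too weak. So the crude union bound is not enough, and the real work is to interleave the summation over heads with the recursion, exactly as in the proof of Theorem~\ref{thm-cubic-conv}.

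Concretely, I would redo the single-head inductive argument but carry a sum over the $H$ head choices at each newly peeled-off layer. In the single-head proof one shows, for one attention application, a bound of the schematic form $\|\res(\bP_h \bX \bW_h)\|_{1,\infty} \le \frac{4}{\sqrt{\dqk}}\,\|\bW_{QK,h}\|_1\,\|\bW_h\|_{1,\infty}\,\|\res(\bX)\|_{1,\infty}^3 \le \frac{4\beta}{\sqrt{\dqk}}\,\|\res(\bX)\|_{1,\infty}^3$, where the cubic power comes from $\bD$ contributing one factor of $\|\bE\|$ (hence two factors of $\|\res(\bX)\|$ via $\bE = \bR\bW_{QK}\bR^\T/\sqrt{\dqk}$) and the trailing $\bR$ contributing one more. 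For the multi-head layer, summing over $h\in[H]$ yields $\|\res(\att(\bX))\|_{1,\infty} = \|\res(\sum_h \bP_h \bX \bW_h)\|_{1,\infty} \le \sum_{h\in[H]} \|\res(\bP_h \bX \bW_h)\|_{1,\infty} \le \frac{4\beta H}{\sqrt{\dqk}}\,\|\res(\bX)\|_{1,\infty}^3$. Writing $c = 4\beta H/\sqrt{\dqk}$ and $\epsilon_l = \|\res(\bX^l)\|_{1,\infty}$, the one-layer recursion is $\epsilon_l \le c\,\epsilon_{l-1}^3$. The point of doing this layer by layer — rather than expanding all paths at once — is that the cubic amplification of the residual propagates through depth, so the $H$ only appears once per layer and not once per path.

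Finally I would solve the recursion $\epsilon_L \le c\,\epsilon_{L-1}^3$ with $\epsilon_0 = \|\res(\bX)\|_{1,\infty}$. Unrolling gives $\epsilon_L \le c^{1 + 3 + 3^2 + \cdots + 3^{L-1}}\,\epsilon_0^{3^L} = c^{(3^L-1)/2}\,\epsilon_0^{3^L}$, since $\sum_{k=0}^{L-1} 3^k = (3^L-1)/2$; this is exactly the claimed bound with $c = 4\beta H/\sqrt{\dqk}$. The main obstacle is justifying the per-layer step rigorously: one must verify that the estimate from the single-head proof sketch ($\|\bP_h - \one\bq^\T\| \le 2\|\bD\one\bq^\T\|$ and the ensuing $\|\res(\bP_h\bX\bW_h)\|$ bound) holds uniformly over all heads with the stated $\beta$, that $\res(\sum_h M_h)$ can be controlled by $\sum_h \|\res(M_h)\|_{1,\infty}$ despite $\ell_{1,\infty}$ not being a genuine norm (this works because $\res(\cdot)$ composed with a projection onto the orthogonal complement of $\one$ is linear, and the $\ell_1$ and $\ell_\infty$ norms inside are each subadditive, so the geometric-mean quasi-norm obeys the needed inequality), and that the input to layer $l$ indeed has $\ell_{1,\infty}$-residual $\epsilon_{l-1}$ small enough for the single-layer bound's hypotheses to remain in force — which is automatic once $c\,\epsilon_0^2 < 1$, i.e.\ the convergence regime $4\beta H < \sqrt{\dqk}$ alluded to after the theorem. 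All remaining computations (the constants $2$ and $4$, the precise form of $\bD$, the bookkeeping of which norm hits which factor) are identical to the single-head case and can be imported verbatim.
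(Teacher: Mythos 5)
Your proposal matches the paper's proof: the paper likewise establishes the single-layer multi-head bound $\|\res(\att(\bX))\|_{1,\infty}\le \frac{4\beta H}{\sqrt{\dqk}}\|\res(\bX)\|_{1,\infty}^3$ by applying the triangle inequality over heads to the $\ell_1$ and $\ell_\infty$ norms \emph{separately} and then unrolls the cubic recursion exactly as you do, so the factor $H$ enters once per layer rather than once per path. One small caution: the literal subadditivity $\|\sum_h M_h\|_{1,\infty}\le\sum_h\|M_h\|_{1,\infty}$ you invoke can fail (Cauchy--Schwarz gives $\sum_h\sqrt{a_h b_h}\le\sqrt{(\sum_h a_h)(\sum_h b_h)}$, i.e.\ the wrong direction), but the route you describe in your final paragraph --- bounding $\|\sum_h M_h\|_1$ and $\|\sum_h M_h\|_\infty$ each by $H$ times the uniform per-head bound and taking the geometric mean only at the end --- is exactly what the paper does and yields the stated factor of $H$.
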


The bound guarantees convergence of $\attnet(\bX)$ to rank one when $4 \gamma \beta H < \sqrt{\dqk}$. Our experiments show that this is a rather pessimistic estimate, as, in practice, we observe widespread convergence of output to rank-1.

\textbf{Remark 1. Implications for Xformers.} There has been a surge of architectural variants -- that we collectively refer to as Xformers -- aimed to improve the vanilla transformer \citep{vaswaniTransformer} by reducing the quadratic self-attention complexity. The rank collapse result of Theorem~\ref{thm-cubic-conv-mhsa} carries interesting implications for these architectures. One such variant relies on low-rank or kernel-based approximations to the full attention matrix \cite{kernelTransformer,linformerWang, kernelTransformerChoromanski}, in which case the paths likely converge even faster to rank one due to the imposed low-rankedness.
Another variant only computes a subset of the attention matrix entries using particular patterns \cite{bigbirdZaheer,sparseTransformer}, such as random patterns, in which case one expects the paths to converge more slowly, as randomization tends to increase the rank of the output. %

\section{Mechanisms that counteract rank collapse} \label{sec-counter-conv}

Our findings raise a pertinent question---\emph{why do attention-based networks work in practice if attention degenerates to a rank-1 matrix doubly exponentially with depth?} Aiming to obtain a deeper understanding, we focus on the transformer architecture \cite{vaswaniTransformer} and expand our analysis by incorporating the three important components of transformers that \attnets lack: \textit{skip connections}, \textit{multi-layer perceptrons}, and \textit{layer normalization}.

We adopt a methodical approach where the modifications to the \attnet architecture are introduced one at a time. For each case, we re-derive the convergence bounds and discuss the observed effect.

\subsection{Skip connections are crucial}

A simple modification to the path decomposition argument for \attnet suffices to take into account skip connections. Specifically, we indicate the event that a path has skipped a layer by setting $h=0$ on the corresponding notation:
\begin{align*}
    \bX^L
    &= \sum_{h \in [H]\cup \{0\}} \bP_h^{L} \bX^{L-1} \bW_h^{L}  \nonumber \\
    &= \dots \nonumber \\
    &= \hspace{-3mm} \sum_{h_1, \ldots, h_L \in ([H]\cup \{0\})^L} \hspace{-3mm} (\bP_{h_L}^{L} \cdots \bP_{h_1}^{1}) \, \bX \, (\bW_{h_1}^{1} \cdots \bW_{h_L}^{L}),
\end{align*}
where we have fixed $\bP_0=\bI$ and $\bW_0=\bI$.

As observed, skip connections dramatically diversify the path distribution.
Denote by $\mathcal{P}_l$ the set of paths of length $l$. With skip connections enabled, we have
$$
    |\mathcal{P}_l| = {L\choose l} \, H^l
$$
paths of length $l$ (whereas before we had only length $L$ paths).
We hypothesize that it is the presence of short paths that stops \attnet from degenerating to rank-1.

While we can derive an upper bound for the residual similar to above (which we do in the Appendix for completeness) such an upper bound is vacuously large.
Indeed, %
it is more informative to have a \textit{lower} bound on the residual, to align with practice, where \attnets with skip connections do not suffer rank collapse. We present the following simple lower bound:

\begin{claim}
Consider a depth-$L$ and width-$H$ self-attention network with skip connections. There exist infinitely many parameterizations for which $\|\res(\bX^L)\| \ge \| \res(\bX) \|$. The preceeding holds even for $L\to \infty$ and $\beta$ arbitrarily small.
\end{claim}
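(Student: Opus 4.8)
The plan is to prove the Claim by explicit \emph{construction}: I will exhibit an infinite family of weight assignments for which every path of length $\ge 1$ is annihilated, so that the skip-connection path decomposition of $\bX^L$ collapses to the identity map (up to an additive rank-one term), and therefore leaves the residual $\res(\bX)$ unchanged. The construction is intentionally crude, since the Claim only asks for the \emph{weak} inequality $\|\res(\bX^L)\| \ge \|\res(\bX)\|$.

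The first step is to set $\bW_h^l = \bW_{V,h}^l (\bW_{O,h}^l)^\T = \bm{0}$ for every head $h \in [H]$ and every layer $l \in [L]$ --- for instance by taking each $\bW_{V,h}^l = \bm{0}$ --- while leaving the query and key weight matrices $\bW_{Q,h}^l,\bW_{K,h}^l$ and all bias vectors arbitrary. The second step is to substitute this choice into the skip-connection path decomposition of $\bX^L$ given earlier in this subsection: any path $(h_1,\dots,h_L)$ that activates at least one genuine head (i.e.\ $h_l \in [H]$ for some $l$) contributes a token-wise product $\bW_{h_1}^{1}\cdots \bW_{h_L}^{L}$ containing a zero factor, hence contributes nothing; the only surviving term is the all-skip path $(0,\dots,0)$, for which $\bP_0 = \bW_0 = \bI$. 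This gives $\bX^L = \bX$, or, once the output biases are reinstated, $\bX^L = \bX + \one\bb^\T$ for some bias vector $\bb$. The third step is to observe that $\res(\cdot)$ is invariant under such an additive rank-one shift: shifting the input by $\one\bb^\T$ merely shifts the optimal $\bx$ in the definition of $\res$ by $\bb$, leaving $\bX - \one\bx^\T$ unchanged. Hence $\res(\bX^L) = \res(\bX)$, so $\|\res(\bX^L)\| = \|\res(\bX)\| \ge \|\res(\bX)\|$.

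The remaining step verifies the three qualifiers. Since $\bW_h^l = \bm{0}$, the product $\|\bW_{QK,h}^l\|_1\|\bW_h^l\|_{1,\infty}$ vanishes identically, so the hypothesis $\|\bW_{QK,h}^l\|_1\|\bW_h^l\|_{1,\infty}\le\beta$ is met for every $\beta>0$, in particular for $\beta$ arbitrarily small; the argument never refers to the value of $L$, so it holds for every depth and persists as $L\to\infty$; and the query/key weights together with all biases range over a continuum, yielding infinitely many (indeed uncountably many) valid parameterizations. I do not anticipate a real obstacle here: the only delicate points are the bookkeeping fact that one zero factor kills an entire long path, and the elementary translation-invariance of $\res$.

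For completeness one can even obtain a \emph{strict} increase: take $\bW_{QK,h}^l = t\,\bI$ (e.g.\ $\bW_{Q,h}^l = \bW_{K,h}^l = \sqrt{t}\,\bI$) and $\bW_h^l = \epsilon\,\bI$ with $t$ fixed and large and $\epsilon\to 0^+$. On a generic fixed input every attention matrix then lies within $O(e^{-ct})$ of $\bI$, so $\bX^L$ is a small perturbation of $(1+H\epsilon)^L\bX$ and $\res(\bX^L)$ of $(1+H\epsilon)^L\res(\bX)$; choosing $t$ large enough (depending only on $\bX$) makes the perturbation negligible and gives $\|\res(\bX^L)\|>\|\res(\bX)\|$ with $\beta = t\epsilon$ as small as desired. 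The equality construction above, however, already suffices for the Claim as stated.
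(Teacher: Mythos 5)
Your proposal is correct and matches the paper's own argument: the paper likewise invokes the all-skip path and instantiates the claim by setting $\bW_V^l = 0$ for every layer, which yields $\|\res(\bX^L)\| = \|\res(\bX)\|$ while leaving the query/key weights free (hence infinitely many parameterizations, any $\beta$, any $L$). Your additional bookkeeping about biases and the translation-invariance of $\res$ is a harmless elaboration of the same route.
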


The proof is elementary: by the path decomposition, there is always a path that skips all layers, i.e. the path with length 0, preserving the residual. It then follows that, for any parametrization that renders the contribution of the \attnet layers orthogonal to the input, we will have $\|\res(\bX^L)\| \ge \| \res(\bX) \|$. A simple example of such a parametrization can be recovered by setting $\bW_V^l = 0$ for every $l \in [L]$, in which case $\|\res(\bX^L)\| = \| \res(\bX) \|$.

A tight lower bound to the residual in the presence of skip connections is highly nontrivial, and we pose it as an open challenge to the community.

\textbf{Remark 2. \attnets as ensembles of shallow networks.} %
It can be deduced from Theorem~\ref{thm-cubic-conv-mhsa} that the \attnets with skip connections enabled heavily rely on short paths (since the residual rapidly declines as the path length becomes larger). In other words, \attnets behave like ensembles of shallow single-head self-attention networks. The phenomenon was previously identified for ResNets~\cite{veit2016residual} (though the latter study didn't study the rank-collapse phenomenon). Here, the components of this ensemble are inter-dependent, as each attention head participates in many paths of different lengths. Experimental results in \S\ref{sec-experiments} support this implication. The supplementary material also provides a study of the paths distribution across several common architectures.

\subsection{Multi-layer perceptrons (MLP) help}
We now study how using an MLP affects the residual.
In particular, we focus on \attnets with layers written as
\begin{align*}
    \bX^{l+1}
    &= f_l\left( \sum_{h \in [H]} \bP_h \bX^{l} \bW_{h}\right).
\end{align*}
Note that, to keep the notation compact, we use $f_l$ to denote both the MLP as well as the output bias.

In our subsequent analysis, we use $\lambda_{l, 1,\infty}$ to denote the Lipschitz constant of $f_l$ with respect to $\ell_{1,\infty}$ norm. Note that, though finding the exact constant can be NP-hard even for shallow MLPs~\cite{scaman2018lipschitz}, since $f_l$ comprises of linear transformations with Lipschitz nonlinearities, $f_l$ is generally Lipschitz. %

\begin{cor}[Simplified] \label{cor-res-mlp}
Consider a depth-$L$ and width-$H$ \attnet with MLP. Suppose that $\|\bW_{QK,h}^l \|_1 \| \bW_{h}^{l}\|_{1,\infty}\leq \beta$ for all $h \in [H]$ and $l \in [L]$, let $\gamma$ be a term that depends on the attention entries, and fix $\lambda_{l, 1,\infty} \leq \lambda$. We have that
\begin{align}
    \| \res(\bX^L)\|_{1,\infty}
    &\leq \left( \frac{4 \, \gamma \, \beta \, H\, \lambda} { \sqrt{\dqk}}\right)^{\frac{3^L-1}{2}} \, \|\res(\bX)\|_{1,\infty}^{3^L},
\end{align}
which amounts to a doubly exponential rate of convergence. 
\end{cor}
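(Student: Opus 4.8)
The plan is to mimic the recursive argument used to establish Theorem~\ref{thm-cubic-conv-mhsa}, but carry along the extra multiplicative factor $\lambda$ that the Lipschitz map $f_l$ introduces at each layer. First I would isolate a single layer of the MLP-augmented \attnet. Writing $\bY^{l} = \sum_{h\in[H]} \bP_h \bX^{l} \bW_h$ for the pre-MLP output, Theorem~\ref{thm-cubic-conv-mhsa} applied to a \emph{single} layer (i.e.\ the one-step version of the recursion in its proof) gives a bound of the form
\begin{align*}
    \|\res(\bY^{l})\|_{1,\infty} \leq \frac{4\,\beta\,H}{\sqrt{\dqk}}\,\|\res(\bX^{l})\|_{1,\infty}^{3}.
\end{align*}
The key observation is that $f_l$ is applied \emph{token-wise}, so it commutes with the residual operator in the sense that matters here: for any matrix $\bY$, $\res(f_l(\bY))$ is controlled by the Lipschitz constant times $\res(\bY)$. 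Concretely, since $f_l(\bY) = f_l(\one\by^\T + \res(\bY))$ acts identically on the common component $\one\by^\T$ across rows, subtracting the (row-constant) image of that component shows
\begin{align*}
    \|\res(f_l(\bY))\|_{1,\infty} \leq \lambda_{l,1,\infty}\,\|\res(\bY)\|_{1,\infty} \leq \lambda\,\|\res(\bY)\|_{1,\infty}.
\end{align*}
I would make this step rigorous by noting that $\res(\bY)$ is the projection of $\bY$ onto the orthogonal complement of $\one$ (row-space-wise), that a token-wise Lipschitz map cannot expand this component by more than $\lambda$, and that the output bias contained in $f_l$ contributes only a row-constant term which is annihilated by $\res$.

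Combining the two displays, one layer of the composite map satisfies
\begin{align*}
    \|\res(\bX^{l+1})\|_{1,\infty} \leq \frac{4\,\beta\,H\,\lambda}{\sqrt{\dqk}}\,\|\res(\bX^{l})\|_{1,\infty}^{3}.
\end{align*}
Setting $c = 4\beta H\lambda/\sqrt{\dqk}$ and $r_l = \|\res(\bX^{l})\|_{1,\infty}$, the recursion $r_{l+1}\le c\,r_l^{3}$ unrolls exactly as in the proof of Theorem~\ref{thm-cubic-conv}: by induction $r_L \le c^{(3^L-1)/2}\,r_0^{3^L}$, since the exponents of $c$ satisfy $e_{l+1} = 3e_l + 1$ with $e_0 = 0$, giving $e_L = (3^L-1)/2$, and the exponent on $r_0$ is $3^L$. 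This yields precisely the claimed bound.

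The main obstacle is the second step — justifying cleanly that the token-wise map $f_l$ contracts (or at worst expands by $\lambda$) the residual in the $\ell_{1,\infty}$ composite ``norm''. Two subtleties need care: (i) $\ell_{1,\infty}$ is not a genuine norm (it fails the triangle inequality, as the paper notes), so one must verify that the argument uses only absolute homogeneity and the relationship between $\res(\bX)$ and $\bX$, not subadditivity; and (ii) the Lipschitz constant $\lambda_{l,1,\infty}$ is defined with respect to $\ell_{1,\infty}$, so I must confirm that $\|\res(\bY_1) - \res(\bY_2)\|_{1,\infty}$ and $\|\bY_1 - \bY_2\|_{1,\infty}$ are comparable up to constants absorbed into the definition — or, more simply, argue directly on $\res(f_l(\bY)) = f_l(\bY) - \one(\cdot)^\T$ using that $\res$ is an orthogonal-type projection that is non-expansive. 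I expect this to be handled in the Appendix by the same machinery already developed there for Theorem~\ref{thm-cubic-conv-mhsa}; everything after it is the routine unrolling of the cubic recursion.
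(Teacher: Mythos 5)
Your proposal is correct and follows essentially the same route as the paper's proof: bound the pre-MLP layer output via the single-layer multi-head lemma, observe that $f_l$ maps row-constant matrices to row-constant matrices so its Lipschitz constant controls the residual growth, and unroll the cubic recursion $r_{l+1}\le c\,r_l^3$ to get the exponent $(3^L-1)/2$. The subtlety you flag about $\ell_{1,\infty}$ is handled in the paper exactly as you suggest — by working with the $\ell_1$ and $\ell_\infty$ bounds separately (only homogeneity and the minimality of $\res$ are used, never subadditivity of $\ell_{1,\infty}$) and combining at the end.
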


As seen, though the effect of MLP is less drastic than that of skip connections, the convergence rate in Cor~\ref{cor-res-mlp} can be controlled by the Lipschitz constants $\lambda_{f,1,\infty}$ of the MLPs: the more powerful the MLPs are the slower the convergence becomes. This reveals a tug-of-war between the self-attention layers and the the MLPs, which due to their nonlinearity can increase the rank. \S\ref{sec-experiments} shows that indeed MLPs counteract convergence in experiments.

We should emphasize that using MLPs to counteract the rank-collapse is not without drawbacks: While increasing the Lipschitz constants slows down residual convergence, it also renders the model less robust and more sensitive to input perturbations~\cite{cranko2018lipschitz}. Larger Lipschitz constants may also pose greater challenges to optimization, as they lead to larger gradient variance.

\subsection{Layer normalization plays no role}

Layer normalization is accomplished by rescaling and shifting the input across the feature dimension:
\begin{align*}
    \lnorm{\att(\bX)}
    &= \lnorm{\sum_{h \in [H]} \bP_h \bX \bW_{h} + \one \bb_{O}^\T}
    = \bigg(\sum_{h \in [H]} \bP_h \bX \bW_{h} + \one \bb_{O}^\T  - \one \bb_{LN}^\T \bigg) \bD_{LN}^{-1},
\end{align*}
where $\bb_{\text{LN}}$ is the mean of each column $\att(\bX)$ and $\bD_{LN}$ is a diagonal matrix with entries corresponding to the (possibly scaled or shifted) standard deviation of each column $\att(\bX)$.

By setting $\tilde{\bW}_h = \bW_{h}\bD_{LN}^{-1} $ and $\tilde{\bb}_{O} = \bb_{O} - \bb_{LN}$, the above is re-written as
\begin{align*}
    \lnorm{\att(\bX)}
    &= \sum_{h \in [H]} \bP_h \bX \tilde{\bW}_{h} + \one \tilde{\bb}_{O}^\T,
\end{align*}
which is identical to the equation before layer normalization was applied, though now $\tilde{\bW}_{h}$ and $\tilde{\bb}_{O}$ are input dependent. %
Since right multiplication cannot increase the rank of a matrix,
we conclude that layer normalization does not mitigate the rank collapse.

\section{Experiments}
\label{sec-experiments}

Our experiments first test the rank collapse phenomenon
in several well-known transformers architectures (\S\ref{sec-exp-collapse}).
We also visually illustrate the inductive bias of some architectural variants of transformers with a toy example in \S\ref{sec-exp-circle} and test the paths effectiveness with respect to length in \S\ref{ssec:exp-paths}. Additional results can be found in the Appendix.

\begin{figure*}[t]
\centering
\begin{minipage}[b]{0.3\linewidth}
\centering
\includegraphics[width=1\linewidth]{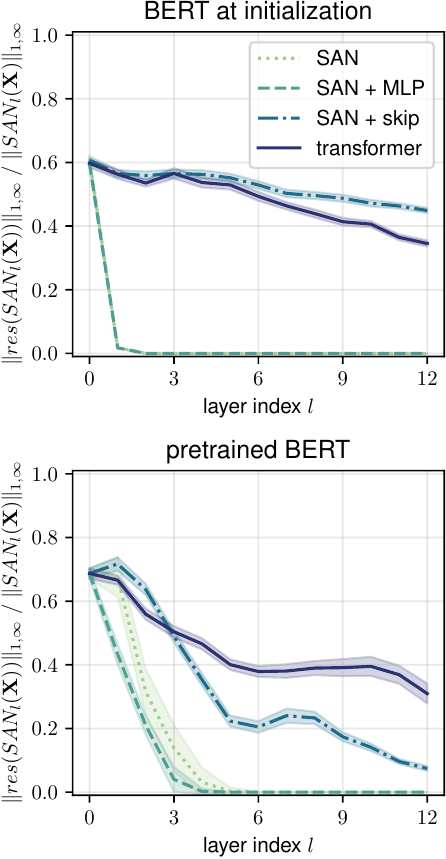}
\caption*{(a) Bert}
\end{minipage}
\hspace{3mm}
\begin{minipage}[b]{0.3\linewidth}
\centering
\includegraphics[width=1\linewidth]{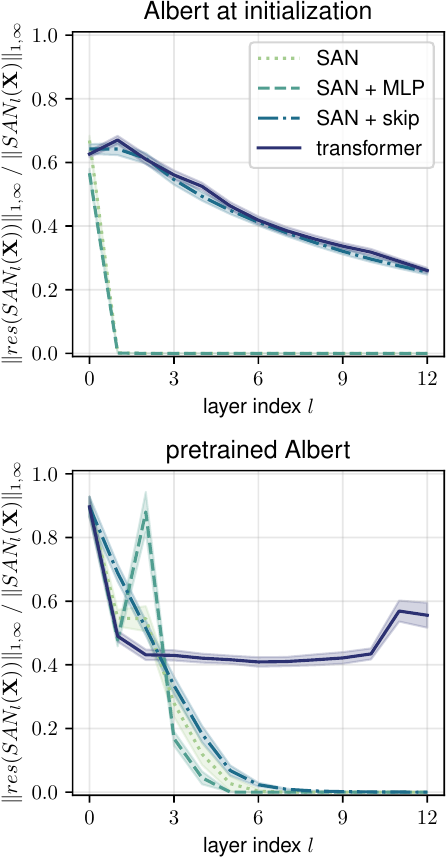}
\caption*{(b) Albert}
\end{minipage}
\hspace{3mm}
\begin{minipage}[b]{0.3\linewidth}
\centering
\includegraphics[width=1\linewidth]{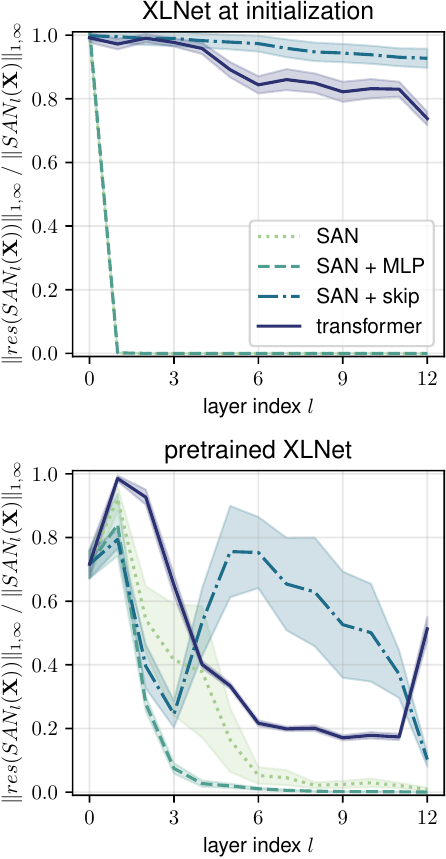}
\caption*{(c) XLNet}
\end{minipage}
\caption{Relative norm of the residual along the depth for three models before and after training. %
Pure attention (\attnet) converges rapidly to a rank-1 matrix. Adding MLP blocks and skip connection gives a transformer. Skip connections play a critical role in mitigating rank collapse (i.e., a zero residual). %
}
\label{fig:residuals}
\end{figure*}

\subsection{Rank collapse in real architectures}
\label{sec-exp-collapse}

To verify our theoretical predictions, we examine the residual of three well-known transformer architectures: BERT~\cite{devlin2018bert}, Albert~\cite{lan2019albert}, and XLNet~\cite{yang2019xlnet}. Figure~\ref{fig:residuals} plots the relative residual
$
    {\| \res(\attnet(\bX^l)\|_{1,\infty} }/{ \| \attnet(\bX^l)\|_{1,\infty} },
$
of each layer's output before and after the networks have been trained. To compute these ratios we ran the network on 32 samples of 128 tokens excerpts of biographies from Wikipedia~\citep{wikibio} and display the mean and standard deviation.

The experiments confirm that, as soon as the skip connections are removed, all networks exhibit a rapid rank collapse. Though MLPs do not seem to help in the mitigation of convergence, we caution that the observation is not an accurate portrayal of how trained transformers behave: removing the skip connections introduces a drastic distribution shift in the MLP input. We expect that the convergence will slow down if the network is retrained.

\subsection{Visualizing the bias of different architectures}
\label{sec-exp-circle}

To empirically investigate the inductive bias of the different components of the transformer architecture, we study the behavior of a single-layer transformer when applied \textit{recurrently} (akin to the universal transformer \cite{universalTransformer}) to predict a simple 2D circular sequence.

Specifically, we train a single-layer transformer to sequentially predict two circular arcs in $\mathbb{R}^2$ of radius $0.3$, starting at $(-0.3, 0)$ and $(0.3, 0)$,  respectively, each directed counter-clockwise and consisting of $1000$ points (illustrated as gray trajectories). %
An input sample consists of a sequence of two opposing points on the circle, one from the top arc and the other from the bottom arc.
We apply teacher-forcing at each step, meaning we give the network the ground truth coordinates of the two current points, and train it to predict the next two points. The model attempts to minimize the MSE loss between the predicted points and the ground truth points on the trajectories. At inference time, we don't apply teacher-forcing, and simply feed the model output as input for the next step.

\begin{figure}[t]
         \centering         \includegraphics[trim={.1cm .1cm .1cm .1cm},clip,width=.80\linewidth]{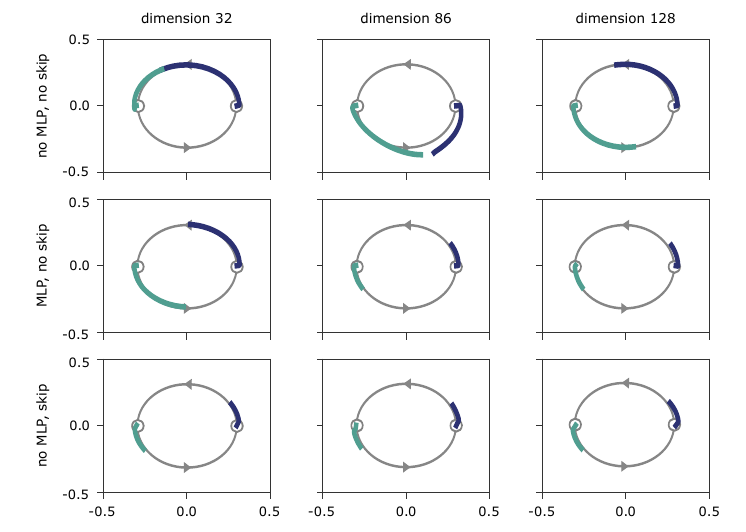}
         \caption{Applying a trained single-layer transformer module recurrently, to models of increasing hidden dimension (horizontal direction) and across architectural variants (vertical direction). The two light background paths illustrate the two training trajectories, for which the starting points are $(-0.3, 0)$ and $(0.3, 0)$. Each figure contains the same number of steps. Consistent with the theory in \S\ref{sec-counter-conv}, convergence slows down or stops as the dimension increases (since $\beta \ge \|\bW_{QK}^l\|_1 \|\bW_{V}^{l}\|_{1,\infty}$ is generally larger), as well as when either MLP or skip connections are added.
} \label{fig-circle}
\end{figure}

Since this recurrent application of a single-layer transformer can be reparametrized to be equivalent to a multi-layer transformer without skip connections, \textit{we hypothesize that at inference time the predicted trajectories of the two arcs will converge to the same point (indicating a rank collapse)}, rather than following the training trajectories. Note that the setting has also been intentionally constructed to enable training even without skip connections (by using teacher forcing) and thus to disentangle the two distinct benefits of skip connections: their ability to improve optimization and their mitigation of rank collapse. %

We trained the network until it could perfectly memorize the next step on the circular trajectories with near-zero loss. Figure~\ref{fig-circle} demonstrates the trajectories predicted at inference time (i.e., without teacher forcing). As seen on the top row, without MLP or skip connections the network exhibits rank collapse. Theorem~\ref{thm-cubic-conv} predicts that the convergence is slower when $\beta \ge \|\bW_{QK}^l\|_1 \|\bW_{V}^{l}\|_{1,\infty}$ increases. Indeed, as the hidden dimension increases from 32 to 128 (leading to larger $\beta$ at initialization), the convergence slows down, becoming hardly observable for dimension 128.

We conclude that, in accordance to our analysis, adding MLP or skip connections either stops or drastically slows down rank collapse. As observed, skip connections tend to slow down points from moving. The latter phenomenon is because in this setting skip connections introduce a bias towards remaining in the same position. On the other hand, adding MLPs does not exhibit the same bias.

\subsection{Path effectiveness}
\label{ssec:exp-paths}

\attnets can be seen as ensembles of paths of different lengths (from 0 to $L$), each involving a different sequence of self-attention heads. Our analysis of $\attnet$ with skip connections indicates that path expressivity decreases with path length, even if the number of non-linear operations involved increases.
To test this hypothesis, we isolate paths of different lengths and evaluate their predictive power.

\textbf{Tasks.} We considered the following three tasks to test path effectiveness with respect to length:
\begin{itemize}
    \item \emph{Sequence memorization.}
To solve this task, a model needs to memorize a pre-determined mapping from natural language sentences and random label sequences of the same length. %
We use random tokens (rather than actual labels) to make this purely a test of \textit{expressiveness} of a network by way of \textit{memorizing} training data, rather than confounding effects such as generalizability.
The models tested %
are trained to minimize the cross entropy loss between predicted and the ground truth labels.
The training data consist of 500 English sentences from Wikipedia and News sources \citep{pascalChallenge,glue}, which are tokenized using the SentencePiece tokenizer \citep{sentencePiece} into a vocabulary of size $30522$ with 128 tokens per sequence. Each sequence is mapped to a random binary sequence of the same length.
\item \emph{Learning to sort}.
Given an input sequence of letters, this task learns to sort the letters in alphabetical ordering (similar task have been studied before~\citep{NEURIPS2019_9001ca42}).
Specifically, the model's output for each input letter is used to determine the position of that letter in the predicted ordering. Each input sequence, of length $8$, is created by sampling uniformly randomly, with replacement, from an alphabet of size $10$. The training and test sets consist of 1000 and 200 sequences, respectively.
To ensure robustness with respect to hyperparameters, we experimented with a variety of settings (adjusting the model depth, number of heads, and the difficulty of the task by changing the alphabet size and sequence length) and observed consistent behavior. %
\item \emph{Convex hull prediction}.
This task was inspired by the work of~\cite{vinyals2015pointer}. Given a sequence of $N$ points uniformly distributed in $[0, 1]\times [0,1]$ and shifted by a random bivariate standard normal, this task predicts the convex hull of these points. Specifically, for each point in the set, the model predicts whether it's part of the convex hull.
The training set consists of $10,000$ sequences of points in $[0, 1]\times [0,1]$, each of length $10$.
\end{itemize}

In all three tasks, we report the test-set per-token label prediction accuracy as the evaluation metric.

\begin{figure*}[t]
\centering
\begin{minipage}[b]{.32\textwidth}\centering
\begin{subfigure}
\centering
     \includegraphics[trim={.2cm .2cm .5cm .4cm},clip,width=1\linewidth]{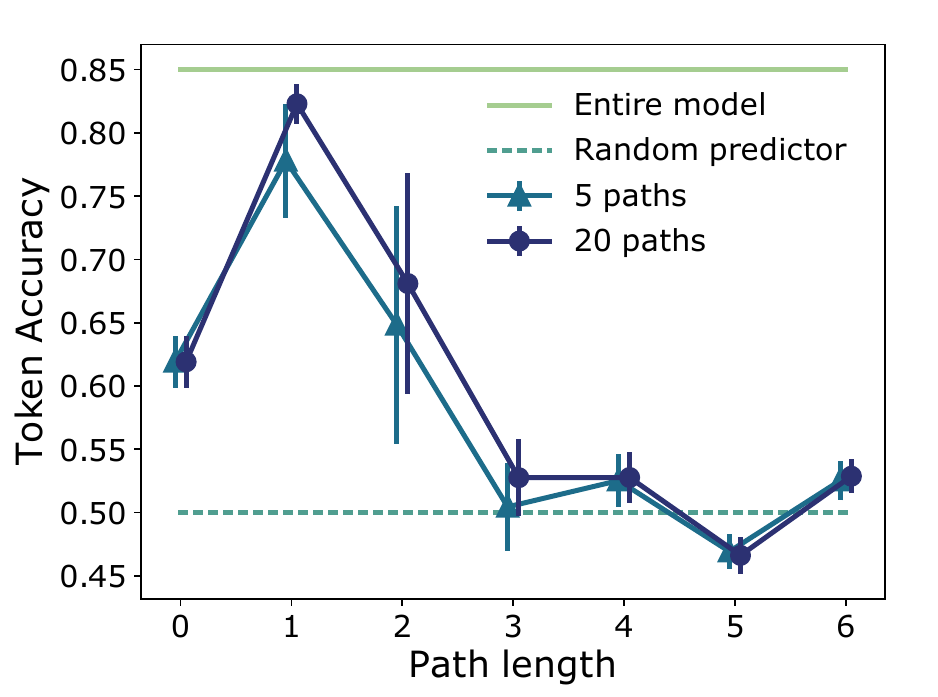}
    \caption*{(a) Memorization }
\end{subfigure}
\end{minipage}
~
\begin{minipage}[b]{.32\textwidth}\centering
\begin{subfigure}
         \centering
         \includegraphics[trim={.2cm .2cm .5cm .4cm},clip,width=1\linewidth]{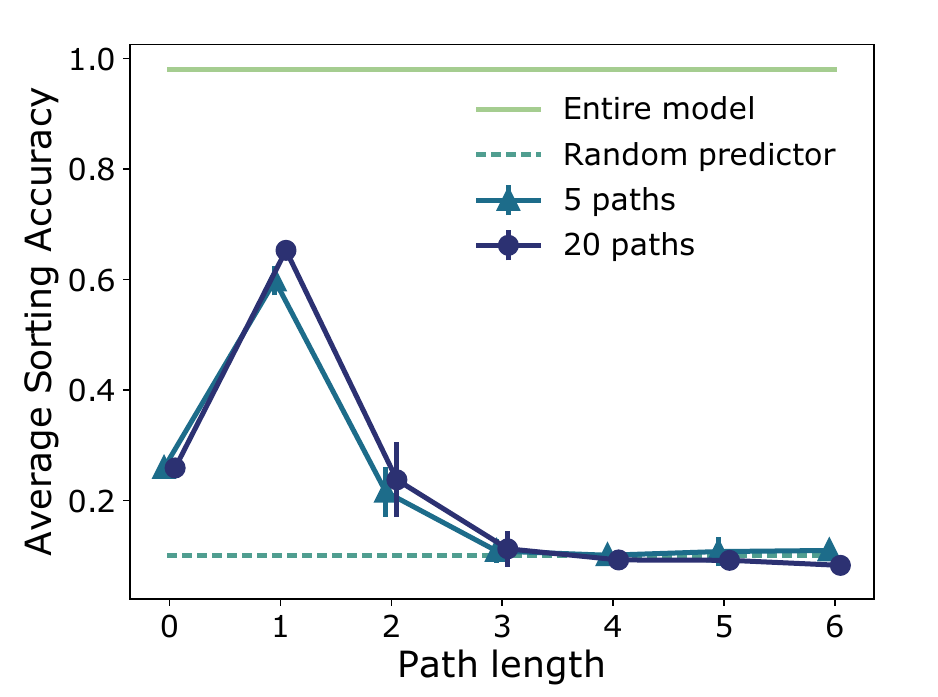}
    \caption*{(b) Sorting}
     \end{subfigure}
\end{minipage}
~
\begin{minipage}[b]{.32\textwidth}\centering
\begin{subfigure}
         \centering
         \includegraphics[trim={.2cm .2cm .5cm .4cm},clip,width=1\linewidth]{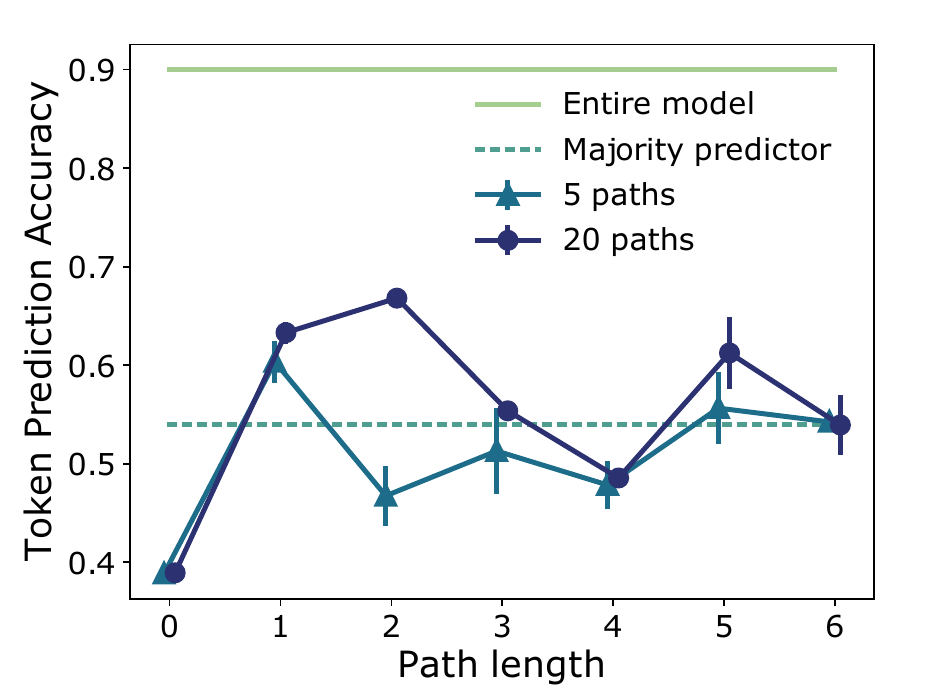}
    \caption*{(c) Convex hull}
\end{subfigure}
\end{minipage}
\caption{To determine how much of the expressive power can be attributed to short vs long paths, we examine the performance of subsets of paths of different lengths (rather than of the entire \attnet).
Performance can be seen to consistently deteriorate with respect to path length, supporting our hypothesis that short paths are responsible for the majority of the expressive power. } \label{fig-path_effectiveness}
\vspace{-10pt}
\end{figure*}

\paragraph{Path effectiveness test.} We measure the effectiveness of individual paths by a `path disentanglement' procedure that we apply at inference time: the procedure isolates the weights involved and the output of an individual path $( \bP_{h_L}^{L} \cdots \bP_{h_1}^{1}) \, \bX \, (\bW_{h_1}^{1} \cdots \bW_{h_L}^{L})$ for \textit{any} given sequence of heads $h_1, \cdots, h_L \in [H\cup 0]^L$. After the transformer has been successfully trained to solve each task (without modifications), we use this procedure to determine the output of a randomly sampled set of paths of a given length. We then evaluate the task performance based solely on the normalized sum of this subset of paths (rather than from all paths). %
Note that the training remains unaltered and uses all heads simultaneously, therefore ensuring that each path learns to its full effectiveness.

Figure~\ref{fig-path_effectiveness} illustrates the resulting performance across all three tasks. We test different subset sizes  and report the average and standard deviation of five repetitions. For reference, we also plot the accuracy of a naive classifier as well as of the entire trained model (i.e., before the path decomposition). %
As observed, short paths carry predictive power, with length-1 paths attaining accuracy above 0.8,0.6, and, 0.65 in the memorization, sorting, and convex hull tasks, respectively. On the other hand, the output of longer paths is not much better than a random guess (red horizontal lines). We note that, since there is a class imbalance in the convex hull task, we use a majority class predictor to obtain a random baseline. %
Though the difference in accuracy between short and long paths is less pronounced for the convex hull task, we observe that the variance of the long paths is significantly larger, rendering them not much better than a random guess. Length zero paths attain very small variance, but contain no useful information about the task (likely because they do not exploit global information).

The depths ($L$), number of heads ($H$), and hidden dimensions ($d$) for the three models are: $L$:6, $H$:2, $d:$250 for memorization, $L$:6, $H$:2, $d$:48 for sorting, and $L$:6, $H$:3, $d$:84 for convex hull.
It's important to note that for all three tasks, while higher \textit{peak} accuracies are attainable with increased model capacity and training time, our focus is to study the effects of path length on performance. Indeed, the trend for degenerating performance as path length increases stayed consistent across model sizes in all experiments.

The rapidly diminishing effectiveness of paths with respect to length indicates that the transformer relies almost exclusively on short paths. In other words, the transformer behaves like an ensemble of \textit{shallow} networks. Furthermore, the results indicate that there is \textit{underutilized} capacity in long paths, and suggest that one way to make them, and hence the transformer, more effective, is to prevent the long paths from losing rank.

\section{Related works}

Skip connections were first introduced in ResNets \citep{he2016deep}, ever since, it has been used to facilitate optimization in deep networks \citep{he2016identity, resnetEnsemble, balduzzi2018shattered}. In particular, skip connections tackle the vanishing gradient problem, by allowing the gradient to flow bypass the skipped layers during backpropagation.
The original motivation of using skip connections in transformers follow the same reasoning on facilitating optimization \cite{vaswaniTransformer}.
With the paths decomposition for transformers, we discover an additional surprising importance of skip connections: they prevent the transformer output from degenerating to rank one exponentially quickly with respect to network depth.

Veit et al. (\citep{resnetEnsemble}) introduced an analogous interpretation for residual networks as a collection of paths of varying lengths, and found that the length of the effective paths in deep residual networks are much shorter than the total network depth, due to the gradients used for parameter updates coming overwhelmingly from these short paths. Our finding suggests that \attnets rely on short paths to avoid rank collapse.
On the other hand, Daneshmand et al.~\citep{daneshmand2020batch} studied rank collapse in randomly initialized linear and ReLU networks and showed that batch normalization is an effective mitigation strategy.

Some recent works have approximated the attention matrix with low-rank factorizations \cite{linformerWang, synthesizerTay} or kernel methods \cite{kernelTransformer, kernelTransformerChoromanski}, to reduce the quadratic self-attention complexity. Our work is orthogonal to these works, by studying the rank of the network's output (rather than of the attention matrix).

There have been other recent advances in understanding the theory behind transformers:
\cite{perez2018on,universalTransformer} proved Turing universality, \cite{Cordonnier2020On} provided necessary and sufficient conditions for attention to simulate convolution. A linearized form of self-attention was also found to exhibit a depth phase transition \cite{levine2020limits}; and the Lipschitz constant of self-attention was analyzed by \cite{kim2020lipschitz}.

Perhaps the convergence to rank one of a path should come as no surprise: each path component contains row-stochastic matrices as a result of the softmax attention, and \cite{matrixConvergence} showed the exponential convergence of products of stochastic matrices to rank one. While the intuition behind stochastic matrices driving convergence still applies, in deep attention networks these matrices interact in more complex ways than what classical analyses consider. %
As we show, because of these interactions the rank collapses much faster than what would be expected based on classical analyses (cubic vs linear rate).

\section{Conclusion}
This work exposes competing forces over rank collapse in self-attention networks, namely self-attention vs skip connections and MLPs. In the process, we develop a path decomposition for \attnets, which modularizes the study of self-attention and is of independent interest to additional applications. These results open the door for many exciting future directions. For instance, how can one leverage the token-uniformity inductive bias revealed to design more effective networks, perhaps better at utilizing long paths? What are some practical implications for width-depth trade-off? How do we prove meaningful lower bounds of residue convergence for transformers? Answering these questions has broad implications in advancing the state of the art in deep learning.

\textbf{Acknowledgements.} Andreas Loukas would like to thank the Swiss National Science Foundation for supporting him in the context of the project ``Deep Learning for Graph-Structured Data'' (grant number PZ00P2 179981). Jean-Baptiste Cordonnier is supported by the Swiss Data Science Center (SDSC).

\bibliography{main}
\bibliographystyle{alpha}

\appendix

\section{Deferred Proofs}
\label{sec:res-bounds}

We build our argument step by step, by first considering a single-head self-attention layer in \S\ref{subsec:san-shsl} and then moving to deeper networks with single and multiple heads in \S\ref{subsec:san-shml} and~\S\ref{subsec:san-mhml}. The results are extended to take into account skip connections and MLPs in \S\ref{subsec:san-skip} and~\S\ref{subsec:san-mlp}

\subsection{Single-layer and single-head}
\label{subsec:san-shsl}

We consider a single-head self-attention layer:
$$
    \bX' = \att(\bX) = \bP \bX \bW_{V}
$$
We focus in particular on how the residual changes. As discussed previously, the value bias can be safely ignored since it does not contribute to the residual.

The following is proved:

\begin{lemma} \label{lemma-1layer-1head}
The residual abides to:
$$
\| \res(\att(\bX)) \|_{1,\infty} \leq  \frac{4 \gamma \, \|\bW_{QK}\|_1 \, \|\bW_{V}\|_{1,\infty}}{\sqrt{\dqk}}  \, \|\res(\bX)\|_{1,\infty}^3,
$$
with $\gamma$ selected such that
$\sqrt{\max_{i,j,j'} |A_{ij} - A_{ij'} | \, \sum_{i} \max_{j,j'} |A_{ij} - A_{ij'} |} \leq \gamma \max_{j,j'} \sum_{i} |A_{ij} - A_{ij'} |$ and $|E_{ij} - E_{ij'}| \leq 1.256$ with $\bE = \res(\bX) \frac{\bW_{QK}}{\sqrt{\dqk}} \res(\bX)^\T $.
\end{lemma}

The unscaled attention scores are computed as follows,
\begin{align}
    \bA = (\bX \bW_Q + \one \bb_Q^\top)(\bX \bW_K + \one \bb_K^\top)^\top
\end{align}
and following~\cite{jbCollaborate}, we can use the softmax shift invariance property to prune the terms constant over the columns and obtain,
\begin{align}
    \bA = \bX \bW_{QK} \bX^\top + \one \bb_{QK}^\top \bX^\top
\end{align}
with $\bW_{QK} = \bW_Q \bW_K^\top $ and $\bb_{QK} = \bW_K \bb_Q$.

We use the shorthand notation $\bR := \res(\bX)$ and $\bR' := \res(\bX')$.

The attention matrix can be written as
\begin{align*}
    \bA
    &= (\one \bx^\T + \bR) \frac{\bW_{QK}}{\sqrt{\dqk}} (\one \bx^\T + \bR)^\T + \one \frac{\bb_{QK}^\top}{\sqrt{\dqk}}(\one \bx^\top + \bR)^\top \\
    &= \left(\frac{\bx^\T \bW_{QK}\bx}{\sqrt{\dqk}}  \one + \bR \frac{\bW_{QK}}{\sqrt{\dqk}} \bx  + \one \frac{\bb_{QK}^\top}{\sqrt{\dqk}} \bx \right) \one^\T + \one \bx^\T \frac{\bW_{QK}}{\sqrt{\dqk}}\bR^\T +
    \bR \frac{\bW_{QK}}{\sqrt{\dqk}} \bR^\T
    + \one \frac{\bb_{QK}^\top}{\sqrt{\dqk}} \bR^\top
\end{align*}
Using the shift-invariance property of the softmax operator, the first term above can be safely ignored since it is constant across columns.
We therefore have that
\begin{align*}
    \bP
    &= \softmax \left(\bR \frac{\bW_{QK}}{\sqrt{\dqk}} \bR^\T + \one \br^\T \right),
\end{align*}
where we have set
$\br := \bR \frac{\bW_{QK}^\T}{\sqrt{\dqk}} \bx + \bR \frac{\bb_{QK}}{\sqrt{\dqk}}$.

Setting $\bE = \bR \frac{\bW_{QK}}{\sqrt{\dqk}} \bR^\T $ and $\tilde{\bA} = \one \br^\T$, the input reweighted by the attention probibilities $\bP \bX$ is given by
\begin{align}
    \bP \bX
    &= \bP (\one \bx^\T + \bR)  \\
    &= \one \bx^\T   + \bP \bR  \\
    &= \one \bx^\T  + \softmax( \one \br^\T + \bE) \bR  \\
    &\leq \one \bx^\T +  (\bI + 2 \bD) \one \, \softmax(\br)^\T \bR   \\
    &= \one (\bx^\T + \softmax(\br)^\T \bR) +  2 \bD \, \one \, \softmax(\br)^\T \bR
\end{align}
where the inequality above is entry-wise and follows from Lemma~\ref{lemma-p} whenever $|E_{ij} - E_{ij'}| \leq 1.256$.
Similarly $ \bP \bX \geq \one (\bx^\T + \softmax(\br)^\T \bR) - 2\bD \, \one \, \softmax(\br)^\T \bR $, where we again invoke Lemma~\ref{lemma-p}.

Therefore, the (entry-wise) distance of the output of the self-attention layer $SA(\bX) = \bP \bX \bW_V$ from being constant across tokens is at most:
\begin{align}
    | [SA(\bX) - \one (\br')^\T]_{ij} |
    &\leq  2 \, |[\bD \, \one \, \softmax(\br)^\T \bR \bW_V]_{ij}|,
    \label{eq:slsh_res_bound}
\end{align}
where $\br' =  (\bx + \bR^\T \softmax(\br))\bW_V$.

Now we bound the right hand side of the above inequality. For the $\ell_1$ norm we obtain:
%
\begin{align} \label{eq:slsh_softmax_bound}
    \|\bD \, \one \, \softmax(\br)^\T \bR \bW_V\|_1
    &\leq \|\bD \one\|_1\, \|\bR\|_1 \|\bW_V\|_1,
\end{align}
where the last step is due to $\|\softmax(\br)\|_1=1$ and $\|\bA\bB\|_1\le \|\bA\|_1 \|\bB\|_1$, 
implying
$ \|SA(\bX) - \one (\br')^\T\|_1 \leq 2 \|\bD \one\|_1\, \|\bR\|_1 \|\bW_V\|_1$.

On the other hand, an analogous argument gives the following bound on the $\ell_\infty$ norm of the residual:
\begin{align*}
    \|SA(\bX) - \one (\br')^\T\|_\infty
    &\leq 2 \|\bD \, \one\|_\infty \|\softmax(\br)^\T \bR \bW_V \|_\infty \\
    &\leq 2 \|\bD \, \one\|_\infty \|\bR\|_\infty \|\bW_V \|_\infty
\end{align*}
Combining the two norms we obtain:
\begin{align*}
    \|\bR'\|_{1,\infty} = \sqrt{\|\bR'\|_1 \|\bR'\|_\infty}
    &\leq 2 \sqrt{\|\bD \one\|_1 \|\bD \, \one\|_\infty } \, \|\bR\|_{1,\infty} \|\bW_V\|_{1,\infty}
\end{align*}
Moreover, by the definition of $\bD$ as in Lemma~\ref{lemma-p} and under the current Lemma's definition, we have that 
\begin{align*}
    \|\bD \one\|_1 \|\bD \, \one\|_\infty 
    &= \max_{i,j,j'} |E_{ij} - E_{ij'} | \, \sum_{i} \max_{j,j'} |E_{ij} - E_{ij'} | \\
    &= \max_{i,j,j'} |A_{ij} - A_{ij'} | \, \sum_{i} \max_{j,j'} |A_{ij} - A_{ij'} | \\    
    &\leq \gamma^2 \left(\max_{j,j'} \sum_{i} |A_{ij} - A_{ij'} |\right)^2 \tag{by assumption} \\
    &= \gamma^2 \left(\max_{j,j'} \sum_{i} |E_{ij} - E_{ij'} |\right)^2 \\
    &\leq 4 \gamma^2 \, \|\bE\|_{1}^2 \\
    &=  4 \gamma^2\, \|\bR \frac{\bW_{QK}}{\sqrt{\dqk}} \bR^\T\|_1^2 \\
    &\leq \frac{4\gamma^2}{\dqk} \, \|\bR\|_1^2 \|\bW_{QK}\|_1^2 \|\bR^\T\|_1^2 
    = \left(\frac{2\gamma}{\sqrt{\dqk}} \, \|\bR\|_{1,\infty}^2 \|\bW_{QK}\|_1\right)^2.
\end{align*}
The above imply 
\begin{align*}
    \|\bR'\|_{1,\infty} = \sqrt{\|\bR'\|_1 \|\bR'\|_\infty}
    &\leq \frac{4\gamma\, \|\bW_{QK}\|_1 \|\bW_{V}\|_{1, \infty}}{\sqrt{\dqk}}  \, \|\bR\|_{1,\infty}^3
\end{align*}
which is equivalent to the main claim.

\subsection{Multiple-heads and single-layer}
\label{subsec:san-mhsl}

\begin{lemma} \label{lemma-1layer-multi-head}
In the setting of Lemma~\ref{lemma-1layer-1head}, the residual of the output of a $H$-heads attention layer abides to:
\begin{align}
\| \res(\att(\bX)) \|_{1,\infty} \leq  \frac{4 H \gamma \beta}{\sqrt{\dqk}}  \, \|\res(\bX)\|_{1,\infty}^3\,,
\end{align}
where $\| \bW_{QK,h} \|_1 \| \bW_{h} \|_{1, \infty} \leq \beta$ for all heads $h\in[H]$.
\end{lemma}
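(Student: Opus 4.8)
The plan is to bootstrap from the single-head, single-layer estimate of Lemma~\ref{lemma-1layer-1head} by replaying its internal decomposition once per head and then aggregating over the $H$ heads. Concretely, I would start from $\att(\bX) = \sum_{h\in[H]}\bP_h \bX \bW_h + \one\bb_O^\T$ and, writing $\bX = \one\bx^\T + \bR$ with $\bR = \res(\bX)$, reuse the softmax manipulations from the proof of Lemma~\ref{lemma-1layer-1head} for each head separately: shift-invariance gives $\bP_h = \softmax(\bE_h + \one\br_h^\T)$ with $\bE_h = \bR\,\bW_{QK,h}\,\bR^\T/\sqrt{\dqk}$ and an appropriate $\br_h$, and Lemma~\ref{lemma-p} yields the entrywise sandwich $\bP_h\bX = \one(\bx + \bR^\T\softmax(\br_h))^\T + \bm{F}_h$, where $\bm{F}_h$ is controlled entrywise by $2\,\bD_h\one\softmax(\br_h)^\T\bR$ and $\bD_h$ is the diagonal matrix of Lemma~\ref{lemma-p} built from $\bE_h$.

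Substituting, the pieces that are constant across tokens all collapse into one rank-one matrix: $\att(\bX) = \one\bm{v}^\T + \sum_{h\in[H]}\bm{F}_h\bW_h$ with $\bm{v}^\T = \bb_O^\T + \sum_{h\in[H]} (\bx + \bR^\T\softmax(\br_h))^\T\bW_h$. Since $\res(\cdot)$ returns the residual of minimal norm, this gives $\|\res(\att(\bX))\|_{1,\infty} \le \|\sum_{h\in[H]}\bm{F}_h\bW_h\|_{1,\infty}$, so the remaining task is to bound the aggregated perturbation.

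For the aggregation I would avoid working with $\|\cdot\|_{1,\infty}$ directly (it is not subadditive) and instead bound the honest norms $\|\cdot\|_1$ and $\|\cdot\|_\infty$ separately, using the triangle inequality for each: $\|\sum_{h}\bm{F}_h\bW_h\|_1 \le \sum_h\|\bm{F}_h\bW_h\|_1$, and likewise for $\|\cdot\|_\infty$. Per head, the identical chain of estimates used in Lemma~\ref{lemma-1layer-1head} --- H\"{o}lder's inequality, $\|\softmax(\br_h)\|_1 = 1$, submultiplicativity of $\|\cdot\|_1$, and $\|\bD_h\one\|_\infty \le 2\|\bE_h\|_1 \le 2\|\bR\|_1\|\bW_{QK,h}\|_1\|\bR\|_\infty/\sqrt{\dqk}$ --- gives $\|\bm{F}_h\bW_h\|_1 \le \tfrac{4}{\sqrt{\dqk}}\|\bR\|_1^2\|\bR\|_\infty\,\|\bW_{QK,h}\|_1\|\bW_h\|_1$ and the analogous $\|\bm{F}_h\bW_h\|_\infty \le \tfrac{4}{\sqrt{\dqk}}\|\bR\|_1\|\bR\|_\infty^2\,\|\bW_{QK,h}\|_1\|\bW_h\|_\infty$. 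Multiplying the two aggregated bounds, taking the square root, and invoking the uniform hypothesis $\|\bW_{QK,h}\|_1\|\bW_h\|_{1,\infty} \le \beta$ to control each term of the resulting double sum leaves a factor $H$ and the claimed bound $\|\res(\att(\bX))\|_{1,\infty} \le \tfrac{4H\beta}{\sqrt{\dqk}}\,\|\bR\|_{1,\infty}^3$.

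The step I expect to be the main obstacle is exactly this last recombination: since $\ell_{1,\infty}$ is only absolutely homogeneous and positive definite but not subadditive, one cannot merely sum the per-head quantities $\|\bm{F}_h\bW_h\|_{1,\infty}$; the aggregation must be routed through the genuine $\ell_1$ and $\ell_\infty$ norms, and the geometric mean taken only at the very end, with the per-head constraint on $\|\bW_{QK,h}\|_1\|\bW_h\|_{1,\infty}$ used to keep the cross terms between the $\ell_1$ and $\ell_\infty$ estimates under control. Everything preceding it is a head-indexed copy of the single-head argument and should go through verbatim.
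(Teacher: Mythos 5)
Your proposal matches the paper's proof essentially step for step: the paper likewise replays the single-head softmax/perturbation analysis once per head to obtain the entrywise bound $\bigl|[\att(\bX)-\one(\br'')^\T]_{ij}\bigr| \le 2\bigl|[\sum_h \bD_h\,\one\,\softmax(\br_h)^\T\bR\,\bW_h]_{ij}\bigr|$, passes to the genuine $\ell_1$ and $\ell_\infty$ norms before aggregating over heads (bounding the sum by $H$ times the per-head maximum, where you keep the triangle-inequality sum --- an immaterial difference), and takes the geometric mean only at the very end, which is exactly the route you correctly identify as forced by the non-subadditivity of $\ell_{1,\infty}$. The one step where your write-up and the paper are equally terse is the final recombination of the aggregated $\ell_1$ and $\ell_\infty$ estimates under the per-head hypothesis $\|\bW_{QK,h}\|_1\|\bW_h\|_{1,\infty}\le\beta$ (the cross terms you mention), so your proof is correct to the same degree as the paper's own.
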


\begin{proof}

The output of a multi-head attention layer is
\begin{align}
    SA(\bX) = \sum_{h\in [H]} \bP_h \bX \bW_{h}\, = \sum_{h\in [H]} SA_h(\bX),
\end{align}
where $\bW_{h} := \bW_{V,h} \bW_{O,h}$ as in the main text and $\bP_h$ is computed using the heads parameters $\bW_{QK,h}$ and $\bb_{QK,h}$.
The proof proceeds similarly to Section~\ref{subsec:san-mhsl} until eq.~\ref{eq:slsh_res_bound},
\begin{align}
    | [SA(\bX) - \one (\br'')^\T]_{ij} |
    &\leq  2 \, \left|\left[ \sum_h \bD_h \, \one \, \softmax(\br_h)^\T \bR \bW_{h}\right]_{ij} \right|,
\end{align}
where $\br'' =  \sum_h (\bx + \bR^\T \softmax(\br_h))\bW_{h}$.

The elementwise inequality implies inequalities for $\ell_1$ and $\ell_{\infty}$ norms and applying the triangle inequality on the sum, we obtain
\begin{align*}
    \| SA^H(\bX) - \one (\br'')^\T \|_1 &
    \leq 2 \sum_{h\in[H]} \| \bD_h \, \one \, \softmax(\br_h)^\T \bR \bW_{h} \|_1
    \leq 2H \max_{h\in[H]} \| \bD_h \, \one \, \softmax(\br_h)^\T \bR \bW_{h} \|_1
\end{align*}
and a similar expression for the  $\ell_{\infty}$ norm.
The rest of the proof proceeds similarly as the single head proof.
\end{proof}

\subsection{Single-head and multiple-layers}
\label{subsec:san-shml}

We next consider how the residual changes after $L$ layers of the form:
$
    \bX^l = \att^l_1(\bX^{l-1}).
$

\begin{customcor}{2.2}
\label{thm-single-head-rec}
In the setting of Lemma~\ref{lemma-1layer-1head}, for any single-head \attnet consisting of $L$ layers with $\|\bW_{QK,1}^l\|_1 \leq \beta$ for every $l \in [L]$, the residual is bounded by
\begin{align}
   \|\res(\attnet(\bX))\|_{1,\infty} \leq \left(\frac{4 \, \gamma \beta}{\sqrt{\dqk}}\right)^{\frac{3^L-1}{2}} \, \|\res (\bX) \|^{3^L}_{1,\infty},
\end{align}
which amounts to a doubly exponential convergence to a rank-1 matrix.
\end{customcor}

\begin{proof}
Unfolding the recursion backwards from the last layer to the first and applying Lemma~\ref{lemma-1layer-1head} we obtain:
\begin{align}
    \| \res(\bX^L)\|_{1,\infty}
    &\leq \frac{4 \, \gamma \beta}{\sqrt{\dqk}} \, \|\res(\bX^{L-1})\|_{1,\infty}^3 \\
    &\leq \frac{4 \, \gamma \beta}{\sqrt{\dqk}} \, \left( \frac{4 \, \gamma \beta}{\sqrt{\dqk}} \, \|\res(\bX^{L-2})\|_{1,\infty}^3\right)^3 \\
    &= \frac{4 \, \gamma \beta}{\sqrt{\dqk}} \, \left(\frac{4 \, \beta}{\sqrt{\dqk}}\right)^3 \, \|\res(\bX^{L-2})\|_{1,\infty}^{3^2} \\
    &\leq \ldots \\
    &\leq \prod_{l=1}^L \left(\frac{4 \, \gamma \beta}{\sqrt{\dqk}}\right)^{3^{l-1}} \, \|\res(\bX)\|_{1,\infty}^{3^L}
    = \left(\frac{4 \, \gamma \, \beta}{\sqrt{\dqk}}\right)^{\frac{3^L-1}{2}} \, \|\res(\bX)\|_{1,\infty}^{3^L},
\end{align}
matching the theorem statement.
\end{proof}

\subsection{Multiple-head and multiple-layers}
\label{subsec:san-mhml}

\begin{customcor}{2.3}[mutli-head multi-layer] \label{lem-conv-all-paths}
In the setting of Lemma~\ref{lemma-1layer-1head}, consider a depth-$L$ \attnet with $H$ heads per layer. Fix $\|\bW_{QK,h}^l \|_1 \| \bW_{h}^{l}\|_{1,\infty}\leq \beta$ for all $h \in [H]$ and $l \in [L]$. The output residual is bounded by
\begin{align}
    \| \res(\bX^L)\|_{1,\infty}
    &\leq \left( \frac{4 \, H \, \gamma \, \beta}{ \sqrt{\dqk}}\right)^{\frac{3^L-1}{2}} \, \|\res(\bX)\|_{1,\infty}^{3^L},
\end{align}
which indicates that the output convergences to a rank-1 matrix doubly exponentialy.
\end{customcor}
\begin{proof}
    The proof procceeds recursively as for Theorem~\ref{thm-single-head-rec} in the single head case but using the bound on single-layer multi-heads residuals from Lemma~\ref{lemma-1layer-multi-head}.
\end{proof}

\subsection{\attnet with skip connections}
\label{subsec:san-skip}
As noted in the main text, a lower bound on the residual better aligns with practice, where \attnets with skip connections do not suffer rank collapse.
For consistency with the other analyses and as one way to illustrate residual growth, we provide a (vacuously large) upper bound on the residual for \attnets with skip connections.

\begin{customcor}{3.1}
[\attnet with skip connections] \label{cor-san-with-skip}
In the setting of Lemma~\ref{lemma-1layer-1head}, consider a depth-$L$ \attnet with $H$ heads per layer and skip connections. Fix $\|\bW_{QK,h}^l\|_1 \| \bW_{h}^l \|_{1, \infty} \leq \beta$ for all heads $h \in [H]$ and layers $l \in [L]$. The output residual is bounded by
\begin{align*}
    \|\res(\bX^L)\|_{1,\infty}
    \leq \max_{0 \le l \le L} \left(\frac{8 \, \gamma \, \beta\,  H}{\sqrt{\dqk}}\right)^{\frac{3^l-1}{2}} \, (2H)^{3^l(L-l)}\|\res(\bX)\|_{1,\infty}^{3^l},
\end{align*}
which does not indicate convergence.
\end{customcor}

\begin{proof}
For a \attnet with skip connections, the residual bound for a single-head single-layer \attnet from lemma~\ref{lemma-1layer-1head} now becomes:
\begin{align}\label{eq-bound-single-head-layer-skip}
    \|\res(\attnet(\bX))\|_{1,\infty} \le \frac{4\, \gamma \|\bW_{QK,h}\|_1 \|\bW_{V}\|_{1, \infty}}{\sqrt{\dqk}}  \, \|\res(\bX)\|_{1,\infty}^3 + \|\res(\bX)\|_{1,\infty}
\end{align}

To obtain a multi-layer bound, we unfold the recursion backwards.

Let us consider a single head model first and fix $\|\bW_{QK,h}^l\|_1 \| \bW_{h}^l \|_{1, \infty} \leq \beta$ for all $l \in [L]$. We have that:
\begin{align} %
    \| \res(\bX^L)\|_{1,\infty}
    &\leq \frac{4 \, \gamma \, \beta}{\sqrt{\dqk}} \, \|\res(\bX^{L-1})\|_{1,\infty}^3 + \|\res(\bX^{L-1})\|_{1,\infty} \nonumber \\
    &\leq 2\max(\frac{4 \, \gamma \, \beta}{\sqrt{\dqk}} \, \|\res(\bX^{L-1})\|_{1,\infty}^3, \,  \|\res(\bX^{L-1})\|_{1,\infty} ) \label{eq:res-skip-2-sides}
\end{align}
Now we unroll this bound across layers to write it in terms of $\res(\bX)$. At the $k^{th}$ step of unrolling, the max is one of the two terms in Eq~\ref{eq:res-skip-2-sides}: either $\frac{4 \, \gamma \, \beta}{\sqrt{\dqk}} \, \|\res(\bX^{L-k})\|_{1,\infty}^3$ or $\|\res(\bX^{L-k})\|_{1,\infty}$, i.e. we make a binary choice. Thus unrolling through all $L$ layers corresponds to a path from the root to the maximum leaf in a depth-$L$ complete binary tree. Each leaf has the form
$ \left(\frac{8 \, \gamma \, \beta}{\sqrt{\dqk}}\right)^{\frac{3^l-1}{2}} \, 2^{3^l(L-l)}\|\res(\bX)\|_{1,\infty}^{3^l}$, where $l$ indicates the number of times the term $\frac{4 \, \gamma \, \beta}{\sqrt{\dqk}} \, \|\res(\bX^{L-k})\|_{1,\infty}^3$ is chosen as the max. Note the ordering of these choices does not matter, only the number of times a term is chosen. Consequently, the residual bound is the maximum amongst such leaf terms:
\begin{align*}
    \| \res(\bX^L)\|_{1,\infty} \leq \max_{0 \le l \le L} \left(\frac{8 \, \gamma \, \beta}{\sqrt{\dqk}}\right)^{\frac{3^l-1}{2}} \, 2^{3^l(L-l)}\|\res(\bX)\|_{1,\infty}^{3^l}.
\end{align*}

We now apply this bound to $H$ heads, we use Lemma~\ref{lemma-1layer-multi-head}, which for a single layer gives:
\begin{align*}
    \|\res(\attnet(\bX))\|_{1,\infty} \le \frac{4\, \gamma \, \beta \, H}{\sqrt{\dqk}}  \, \|\res(\bX)\|_{1,\infty}^3 + H \|\res(\bX)\|_{1,\infty}
\end{align*}
Therefore, accounting for the factor of $H$ in above, we obtain a residual bound for a depth-$L$ width-$H$ $\attnet$ with  skip connections:
\begin{align*}
    \|\res(\bX^L)\|_{1,\infty}
    \leq \max_{0 \le l \le L} \left(\frac{8 \, \gamma \, \beta\,  H}{\sqrt{\dqk}}\right)^{\frac{3^l-1}{2}} \, (2H)^{3^l(L-l)}\|\res(\bX)\|_{1,\infty}^{3^l},
\end{align*}
which concludes the proof.
\end{proof}

\subsection{\attnet with MLP}
\label{subsec:san-mlp}

We now study how using an MLP affects the residual.
Recall we focus on \attnets with layers written as
\begin{align}
    \bX^{l+1}
    &= f_l\left( \sum_{h \in [H]} \bP_h \bX^{l} \bW_{h}\right).
    \label{eq:mlp-formulation}
\end{align}
Note that, to keep the notation compact, we use $f_l$ to encompass both the MLP as well as the output bias.

In our subsequent analysis, we use $\lambda_{l, 1,\infty}$ to denote the Lipschitz constant of $f_l$ with respect to $\ell_{1,\infty}$ norm.

The proof proceeds the same way as in \S\ref{subsec:san-shsl}.
For clarity, we point out the differences with proof in \S\ref{subsec:san-shsl} without repeating details that remain the same.

\begin{customthm}{3.2}
[\attnet with MLP]
In the setting of Lemma~\ref{lemma-1layer-1head}, consider a depth-$L$ and width-$H$ \attnet with MLP. Moreover, let $\|\bW_{QK,h}^l \|_1 \| \bW_{h}^{l}\|_{1,\infty}\leq \beta$ for all $h \in [H]$ and $l \in [L]$ and fix $\lambda_{l, 1,\infty} \leq \lambda$. We then have that
\begin{align}
    \| \res(\bX^L)\|_{1,\infty}
    &\leq \left( \frac{4 \, \gamma \, \beta \, H\, \lambda} { \sqrt{\dqk}}\right)^{\frac{3^L-1}{2}} \, \|\res(\bX)\|_{1,\infty}^{3^L},
\end{align}
which amounts to a doubly exponential rate of convergence. %
with respect to the $\ell_{1,\infty}$ norm.
\end{customthm}

\begin{proof}
With an MLP as formulated in Eq~\ref{eq:mlp-formulation}, we have $\bW_h :=\bW_V \bW_O$ in place of just the value weight $\bW_V$, as defined in the main text. As before, let $\bR$ denote $\res(\bX)$.

The proof proceeds the same way as in Lemma~\ref{lemma-1layer-1head}, until Eq~\ref{eq:slsh_res_bound},
where we handle the multi-head case the same way as in Eq~\ref{lemma-1layer-multi-head} to obtain the entrywise inequality:
\begin{align}
    \left| \left[ \sum_{h \in [H]} \bP_h \bX^{l} \bW_{h} - \one (\br')^\T\right]_{ij} \right|
    &\leq  2 \, \left|\left[ \sum_h \bD_h \, \one \, \softmax(\br_h)^\T \bR \bW_{h}\right]_{ij} \right|,
    \label{eq:mlp-res-softmax-bound}
\end{align}
As in the proof of \ref{lemma-1layer-multi-head}, this elementwise inequality implies the corresponding inequality in matrix norms $\ell_1$ and $\ell_\infty$, to each of which we apply the triangle inequality to yield:
\begin{align*}
    \left\Vert \sum_{h \in [H]} \bP_h \bX^{l} \bW_{h} - \one (\br')^\T \right\Vert_p &
    \leq 2H \max_{h\in[H]} \| \bD_h \, \one \, \softmax(\br_h)^\T \bR \bW_{h} \|_p,
\end{align*}
for $p \in [1,\infty]$.

We now use the fact that $f(\one r'^\top)$ also takes the form $ \one r''^\top$ for some vector $r''$. Indeed, $f$ encompasses weight matrix multiplications, bias addition, and entrywise nonlinearities, all of which preserve the fact that $f(\one r'^\top)$ is constant across rows. Therefore, \begin{align*}\|\res(\attnet(\bX) )\|_p &=
\left\Vert f\left(\sum_{h \in [H]} \bP_h \bX^{l} \bW_{h}\right) - \one r''^\T\right\Vert_p &\\
&=
\left\Vert f\left(\sum_{h \in [H]} \bP_h \bX^{l} \bW_{h}\right) - f(\one (r')^\T) \right\Vert_p &\rhd\, \text{$f$ preserves constancy-across-rows.} \\
&\leq \lambda_{l,p} \left\Vert \sum_{h \in [H]} \bP_h \bX^{l} \bW_{h} - \one (r')^\T\right\Vert_p
&\rhd\, \text{By definition of Lipschitz constant.} \\
&\leq 2\lambda_{l,p}\, H \max_{h\in[H]} \| \bD_h \, \one \, \softmax(\br_h)^\T \bR \bW_{h} \|_p &\rhd\, \text{ By Eq~\ref{eq:mlp-res-softmax-bound}.}
\end{align*}
Subsequently, just like for the single-head single-layer proof, we bound $\| \bD_h \, \one \, \softmax(\br_h)^\T \bR \bW_{h} \|_p$ in the above by
\begin{align}
    \|\bD_h \, \one \, \softmax(\br_h)^\T \bR \bW_{h} \|_1
    &\leq \|\bD_h \one\|_1 \, \|\bR\|_1 \|\bW_h\|_1 , \label{eq:res-bound-l1}\\
    \|\bD_h \, \one \, \softmax(\br_h)^\T \bR \bW_{h} \|_\infty
    &\leq \|\bD_h \one\|_\infty \, \|\bR\|_\infty \|\bW_h\|_\infty.
    \label{eq:res-bound-linfty}
\end{align}

As we have shown before, $\|\bD_h\one\|_{1,\infty}$ can be bounded above by $\frac{2\gamma \, }{\sqrt{\dqk}} \, \|\bR\|_1 \|\bW_{QK,h}\|_1 \|\bR\|_\infty$. Applying this to both Eq~\ref{eq:res-bound-l1} and Eq~\ref{eq:res-bound-linfty}, and combining the two as in Lemma~\ref{lemma-1layer-1head}, yields the bound:
\begin{align*}
\|\res(\attnet(\bX )) \|_{1,\infty}
\leq
\frac{4\, \gamma \, H\, \lambda_{l, 1,\infty} \|\bW_{QK,h}\|_1 \|\bW_{V}\|_{1, \infty}}{\sqrt{\dqk}}  \, \|\res(\bX)\|_{1,\infty}^3
\end{align*}
Finally, we recursively unroll the bound across layers to obtain a residual bound in terms of $\res(\bX)$:
\begin{align*}
    \| \res(\bX^L)\|_{1,\infty}
    &\leq \left( \frac{4 \, \gamma \, \beta\, H\, \lambda_{l, 1,\infty}} { \sqrt{\dqk}}\right)^{\frac{3^L-1}{2}} \, \|\res(\bX)\|_{1,\infty}^{3^L},
\end{align*}
which concludes the proof.
\end{proof}

\subsection{A technical lemma}
\begin{lemma}\label{lemma-p}
Suppose that $\bP$ is the row-stochastic matrix associated with $\bA$ and let $\tilde{\bP}$ be the one associated with $ \tilde{\bA} = \bA - \bE$ for some matrix $\bE$ with $|E_{ij} - E_{ij'}| \leq 1.256$ for every $i,j,j'$. Then
$$
(\bI - \bD) \, \tilde{\bP} \leq \bP \leq (\bI + 2 \bD) \, \tilde{\bP}
$$
with the diagonal matrix $\bD$ having $D_{ii} = \max_{j,j'} |\bdelta_i^\T \bE (\bdelta_j - \bdelta_{j'}) |$ and the inequality taken entry-wise.
\end{lemma}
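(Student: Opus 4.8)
The plan is to prove the bound one row at a time, since the softmax operates independently on each row. Fix a row index $i$ and write $a = \bdelta_i^\T \bA$, $\tilde a = \bdelta_i^\T \tilde{\bA}$, $e = \bdelta_i^\T \bE$, so that $a = \tilde a + e$. The $i$-th rows of $\bP$ and $\tilde{\bP}$ are $\softmax(a)$ and $\softmax(\tilde a)$ respectively. The goal reduces to showing, entry-wise in $j$,
\[
(1 - D_{ii})\,\softmax(\tilde a)_j \;\le\; \softmax(a)_j \;\le\; (1 + 2 D_{ii})\,\softmax(\tilde a)_j ,
\]
with $D_{ii} = \max_{j,j'} |e_j - e_{j'}| = \max_{j,j'} |\bdelta_i^\T \bE(\bdelta_j - \bdelta_{j'})|$.

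First I would write the ratio of a single softmax coordinate to its perturbed counterpart explicitly:
\[
\frac{\softmax(a)_j}{\softmax(\tilde a)_j}
= \frac{e^{a_j}}{e^{\tilde a_j}} \cdot \frac{\sum_k e^{\tilde a_k}}{\sum_k e^{a_k}}
= e^{e_j} \cdot \frac{\sum_k e^{\tilde a_k}}{\sum_k e^{\tilde a_k} e^{e_k}} .
\]
Now I bound the correction factor. Using shift-invariance of softmax (replace $\bE$'s row by $e_j \mapsto e_j - \min_k e_k$, say, which changes nothing), one may assume $0 \le e_k \le D_{ii}$ for all $k$; then $1 \le e^{e_k} \le e^{D_{ii}}$, so the denominator sum $\sum_k e^{\tilde a_k} e^{e_k}$ lies between $\sum_k e^{\tilde a_k}$ and $e^{D_{ii}} \sum_k e^{\tilde a_k}$, giving
\[
e^{-D_{ii}} \;\le\; \frac{\sum_k e^{\tilde a_k}}{\sum_k e^{\tilde a_k} e^{e_k}} \;\le\; 1 .
\]
Combined with $1 \le e^{e_j} \le e^{D_{ii}}$ this yields $e^{-D_{ii}} \le \softmax(a)_j / \softmax(\tilde a)_j \le e^{D_{ii}}$. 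Finally, I would convert these multiplicative exponential bounds into the linear ones stated, using the elementary inequalities $e^{-t} \ge 1 - t$ (valid for all $t$) and $e^{t} \le 1 + 2t$ for $t \in [0, \ln 2]$ — or, to avoid an extra hypothesis on the size of $D_{ii}$, noting that when $D_{ii} \ge \ln 2$ the upper bound $(\bI + 2\bD)\tilde{\bP}$ already exceeds $2\tilde{\bP} \ge \bP$ trivially since $e^{D_{ii}} \le$ anything once... actually the cleanest route is: $e^{D_{ii}} \le 1 + 2D_{ii}$ whenever $D_{ii} \le \ln 2 \approx 0.69$, and for larger $D_{ii}$ we have $\softmax(a)_j \le \softmax(\tilde a)_j \cdot e^{D_{ii}}$ but also $\softmax(a)_j \le 1 \le (1+2D_{ii})\softmax(\tilde a)_j$ is false in general — so I would instead just assume the regime $D_{ii} \le \ln 2$ is the relevant one (it is, since $\bE$ is small in all applications) or sharpen the constant. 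Reassembling the per-row inequalities into matrix form, with $\bD$ the diagonal matrix collecting the $D_{ii}$, gives $(\bI - \bD)\tilde{\bP} \le \bP \le (\bI + 2\bD)\tilde{\bP}$ entry-wise.

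The main obstacle I anticipate is the passage from the clean multiplicative bound $e^{\pm D_{ii}}$ to the asymmetric linear bound $(1 - D_{ii}, 1 + 2D_{ii})$: the lower side is immediate from $e^{-t} \ge 1-t$, but the upper side $e^{t} \le 1 + 2t$ only holds for $t \lesssim \ln 2$, so some care (or an implicit smallness assumption on $\|\bE\|$, which is what makes $\bE$ "small" in the proof sketch) is needed to justify the stated constant $2$; everywhere this lemma is invoked, $\bE = \bR \bW_{QK} \bR^\T / \sqrt{\dqk}$ is indeed in that regime near rank collapse, so the assumption is harmless. The remaining steps — the explicit ratio computation and the shift-invariance normalization of $\bE$ — are routine.
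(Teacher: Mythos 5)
Your proof follows essentially the same route as the paper's: both work row by row, write the ratio $P_{ij}/\tilde{P}_{ij}$ as $e^{E_{ij}}$ times a normalization correction, bound it multiplicatively by $e^{\pm D_{ii}}$, and then linearize the exponential. Your caveat about the upper linearization $e^{t}\le 1+2t$ is well taken --- the paper's one-line appeal to ``the Taylor expansion of $\exp$'' silently relies on the same smallness of $D_{ii}$ (the constant $2$ only works for $D_{ii}\lesssim 1.26$, and the stated entry-wise upper bound can genuinely fail for larger $\bE$), so this is an implicit hypothesis of the lemma rather than a gap in your argument; it is harmless in the regime where the lemma is invoked.
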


\begin{proof}
Let us start by the definition of the row-stochastic matrix:
\begin{align*}
    P_{ij}
    = [\softmax(\bA)]_{ij}
    = [\softmax(\tilde{\bA} + \bE)]_{ij}
    = \frac{\exp{(\tilde{A}_{ij} + E_{ij})}}{ \sum_{t = 1}^n \exp{(\tilde{A}_{it} + E_{it})} }
    = \frac{\exp{(\tilde{A}_{ij})} \, \exp{(E_{ij})}}{ \sum_{t = 1}^n \exp{(\tilde{A}_{it})} \, \exp{(E_{it})} }
\end{align*}
The above, implies that for every $i,j$ we have:
$$
\min_{j'} \exp{(E_{ij} - E_{ij'})} \, \tilde{P}_{ij} \leq P_{ij} \leq \tilde{P}_{ij} \, \max_{j'} \exp{(E_{ij} - E_{ij'})},
$$
which can be further relaxed to
$$
(1 - 2 \min_{j'} (E_{ij} - E_{ij'})) \, \tilde{P}_{ij} \leq P_{ij} \leq \tilde{P}_{ij} \, ( 1 + 2 \max_{j'} (E_{ij} - E_{ij'})).
$$
which holds for $|E_{ij} - E_{ij'}| \leq 1.256$.
Notice also that
$$
    \max_{j'} (E_{ij} - E_{ij'}) = \max_{j} \bdelta_i^\T \bE (\bdelta_j - \bdelta_{j'})
\quad \text{and} \quad
    \min_{j'} (E_{ij} - E_{ij'}) = \max_{j'} \bdelta_i^\T \bE (\bdelta_{j'}-\bdelta_j),
$$
both of which are at most $
\max_{j'} |\bdelta_i^\T \bE (\bdelta_j - \bdelta_{j'}) |$, from which the claim follows.
\end{proof}

\section{Additional results}
\label{app-exp}

\subsection{The path length distribution of transformers}
\label{sec-paths-counting}

\begin{figure*}[h]
     \includegraphics[width=.9\linewidth]{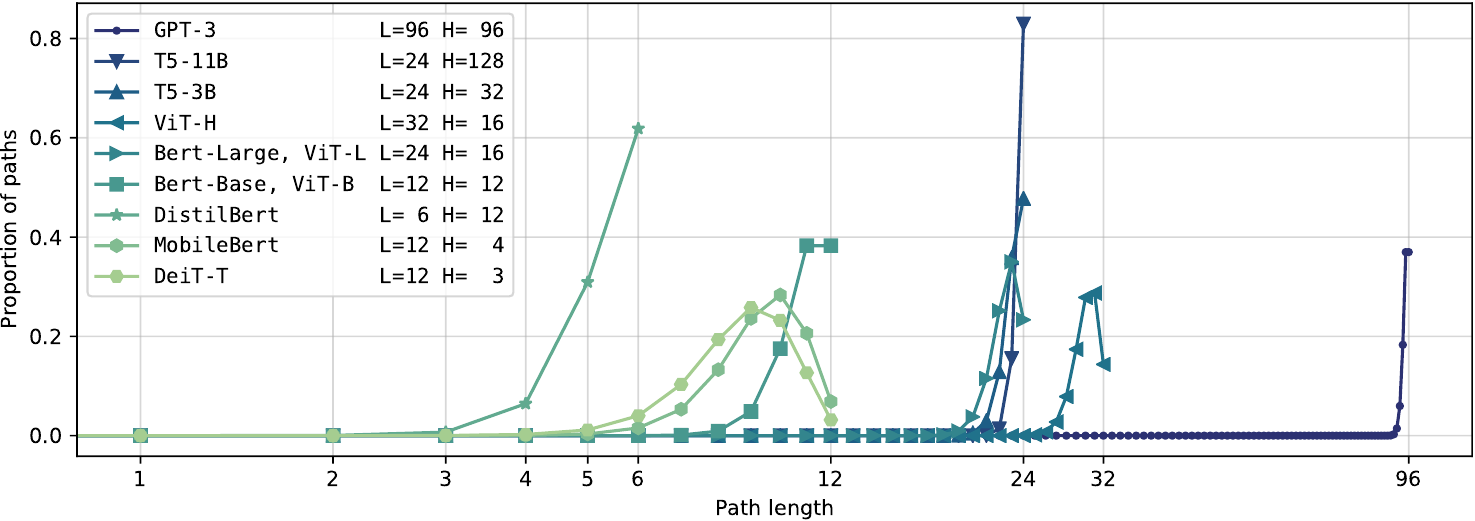}
     \caption{Distribution of the path length for a diverse selection of transformer architectures (encoder only) with different depths and widths.
     The legends are sorted by the total number of heads in the architecture L$\times$H.
     We provide the following architecture: GPT-3 \citep{gpt3}, T5 \citep{2020t5}, Bert \citep{devlin2018bert}, ViT \citep{dosovitskiy2021an}, DistilBert \citep{distilbert}, MobileBert \citep{mobilebert}.}
     \label{fig:path-distr-2}
\end{figure*} \textbf{}

As we saw in~\S2.1, transformers can be viewed as an interdependent ensemble of simpler networks (or paths) each of different depth (or length).  Aiming to gain more insight about the ensemble structure in practice, Fig~\ref{fig:path-distr-2} visualizes the path length distribution in various commonly-used architectures.

Based on the exponential decay of path effectiveness result, we hypothesize that models that focus overwhelmingly on long paths are less efficient than models with a more diverse path distribution.
The long-paths models are furthermore likely to be less robust, as they require larger MLP Lipschitz constants to counteract the token-uniformity inductive bias caused by self-attention, as described in \S3.
It is perhaps no coincidence that the intentionally more efficient models, such as DistilBert or MobileBert, have some of the most diverse path distributions; and that for the most extreme long-paths-focused model, GPT3, studies found that its model size can be reduced by several orders of magnitude and achieve similar performance~\citep{schick2020s}. We leave these exciting directions for future work.

\if 0
    \subsection{Circle experiment}
    Lastly, Fig~\ref{fig:cycle-train} is an illustration of the training trajectories used for studying the inductive bias for recurrent application of \attnet in \S4.2.
    In this experiment,
    we train a single-layer transformer to sequentially predict two circular arcs in $\mathbb{R}^2$, each directed counter-clockwise and consisting of $1000$ points.

    \begin{figure}[h]
             \centering
             \includegraphics[trim={.0cm .0cm .0cm .0cm},clip,width=.3\linewidth]{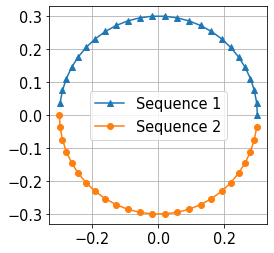}
             \caption{The training trajectories used for studying the inductive bias for recurrent application of \attnet.
             }
        \label{fig:cycle-train}
    \end{figure}
\fi

\end{document}